\theoremstyle{plain}
\newtheorem{theorem}{Theorem}[section]
\theoremstyle{definition}
\theoremstyle{remark}
\newtheorem{remark}[theorem]{Remark}
\newcolumntype{C}{>{\centering\arraybackslash}X}
\newcolumntype{L}{>{\raggedright\arraybackslash}X}
\newcolumntype{R}{>{\raggedleft\arraybackslash}X}
\newcommand{\ts}[1]{{\textsuperscript{#1}}}
\newcommand{\NAME}{FPTQuant\xspace}
\newcommand{\bigO}{\mathcal{O}}
\newcommand{\RR}{\mathbb{R}}
\newcommand{\NN}{\mathbb{N}}
\DeclareMathOperator{\diag}{diag}
\DeclareMathOperator{\RMSNorm}{RMSNorm}
\newcommand{\bx}{\mathbf{x}}
\newcommand{\bs}{\mathbf{s}}
\newcommand{\bq}{\mathbf{q}}
\newcommand{\bk}{\mathbf{k}}
\newcommand{\T}{^\intercal}
\newcommand{\tA}{\bm{\mathrm{A}}}
\newcommand{\tM}{\bm{\mathrm{M}}}
\newcommand{\tR}{\bm{\mathrm{R}}}
\newcommand{\tS}{\bm{\mathrm{S}}}
\newcommand{\tT}{\bm{\mathrm{T}}}
\newcommand{\tW}{\bm{\mathrm{W}}}
\newcommand{\tX}{\bm{\mathrm{X}}}
\newcommand{\tY}{\bm{\mathrm{Y}}}
\newcommand{\tZ}{\bm{\mathrm{Z}}}
\newcommand{\q}[1]{{\small \textsf{#1}}}
\definecolor{darkred}{HTML}{C00000}
\newcommand{\ms}[2]{{#1}$^{\scriptsize \pm{}\text{#2}}$}
\icmltitlerunning{FPTQuant: Function-Preserving Transforms for LLM Quantization}
\begin{document}

\twocolumn[
  \icmltitle{FPTQuant: Function-Preserving Transforms for LLM Quantization}



  \icmlsetsymbol{equal}{*}

  \begin{icmlauthorlist}
    \icmlauthor{Boris van Breugel}{comp}
    \icmlauthor{Yelysei Bondarenko}{comp}
    \icmlauthor{Paul Whatmough}{comp}
    \icmlauthor{Markus Nagel}{comp}
  \end{icmlauthorlist}
  \icmlaffiliation{comp}{Qualcomm AI Research, Amsterdam, Netherlands. \textit{Qualcomm AI Research is an initiative of Qualcomm Technologies, Inc}}

  \icmlcorrespondingauthor{Boris van Breugel}{bvanbreu@qti.qualcomm.com}
  \icmlkeywords{Machine Learning, ICML, Quantization, Model Efficiency}

  \vskip 0.3in
]



\printAffiliationsAndNotice{}  

\begin{abstract}
Large language models (LLMs) require substantial compute, and thus energy, at inference time. 
While quantizing weights and activations is effective at improving efficiency, naive quantization of LLMs can significantly degrade performance due to large magnitude outliers.
This paper describes \NAME, which introduces three novel, lightweight, and expressive function-preserving transforms (FPTs) to facilitate quantization of transformers: (1) a mergeable pre-RoPE transform for queries and keys, (2) a mergeable transform for values, 
and (3) a cheap, dynamic per-token scaling transform. 
By leveraging the equivariances and independencies inherent to canonical transformer operation, we designed these FPTs to maintain the model’s function while shaping the intermediate activation distributions to be more quantization friendly. \NAME requires no custom kernels and adds virtually no overhead during inference. The FPTs are trained both locally to reduce outliers, and end-to-end such that the outputs of the quantized and full-precision models match. 
%
\NAME enables static INT4 quantization with minimal overhead and shows SOTA speed-up of up to $3.9\times$ over FP.
Empirically, \NAME has an excellent accuracy-speed trade-off---it is performing on par or exceeding most prior work and only shows slightly lower accuracy compared to a method that is up to 29\% slower.
\end{abstract}

\section{Introduction}
\paragraph{Motivation.} Inference on large language models (LLMs) incurs a significant compute toll for every token generated, which ultimately costs money and consumes environmental resources. 
These costs limit the proliferation of LLM use, especially on resource-constrained edge devices.
They are also a significant barrier to furthering AI research and democratization.
Therefore, improving LLM inference efficiency is a critical goal. 
Of all LLM efficiency techniques, quantization is by far the most successful, significantly reducing cost by reducing the data and model bit width.

\paragraph{Quantization.}
Integer quantization involves mapping continuous values to a uniform integer grid, while minimizing error.
Outliers in transformer weights, activations, and key-value data are a key challenge for quantization
~\citep{bondarenko_understanding_2021,kovaleva_bert_2021,dettmers_gpt3_int8_2022,sun_massive_2024}.
The fundamental issue is that quantizing outliers to a grid leads to 
an unfortunate range-precision trade-off.
We can either (1) capture the outliers by increasing the range, but lose valuable precision at the highest distribution density around zero, or (2) retain precision, but clip the outliers.
Both options unfortunately impact model performance.

\paragraph{Transforms for aggressive quantization.} 
Prior work has explored operations, such as scaling or rotations, that can be added or applied to pretrained networks to smooth outliers without altering the overall model behaviour \emph{in the absence of quantization}.
For example, \citet{xiao_smoothquant_2024} take a single linear layer, $\tW$, with input, $\tX$, and apply a per-channel scaling $\tT=\diag(\bs)$ to $\tX$ before quantizing, to reduce outliers, applying the inverse scales to the linear weights.
Without quantizers, $(\tX \tT)(\tT^{-1}\tW)=\tX \tW$, but with quantizers $Q$, $Q(\tX \tT)Q(\tT^{-1}\tW)\neq Q(\tX)Q(\tW)$.
We refer to such operations as \emph{function-preserving transforms (FPTs)}, for which we desire the following properties:

\begin{enumerate}[label=P\arabic*,itemsep=1mm, topsep=1mm]
    \item \label{des:identity}\textbf{Function-preservation.} Without any quantization, inserting transform pairs should not change the output (up to computational errors). In practice, this means each FPT typically has an inverse operation.
    \item \label{des:expressivity}\textbf{Expressivity.} Transforms with a continuous parametrization and more degrees of freedom are desirable. Continuous transforms can be optimized directly using gradient descent. Extra degrees of freedom offer more flexibility for reducing the quantization error.
    \item \label{des:cost}\textbf{Compute overhead.}
    Depending on the FPT type and location, it may be possible to merge (or `fuse') it into an existing operation in a pretrained model.
    Non-mergeable FPTs 
    represent a new op in the computational graph, and incur additional overhead, as well as requiring software and/or hardware support.
\end{enumerate}

Importantly, linear FPTs (which includes popular channel scalers, rotations, Hadamard transforms, and Kronecker transforms) can be merged if they are placed directly before or after a linear layer. Mergeability of FPTs can thus be improved if an FPT commutes with an existing operation, such that we have a choice in where we apply the FPT. For example, a rotation on the residual can be merged into transformer block input and output layers, because it commutes with the RMSNorm \citep{ashkboos_slicegpt_2024}. To create maximally-expressive, mergeable transforms, we thus need an ad-hoc understanding of the model to be quantized, including an analysis of which FPTs could commute with existing ops in the model.

\paragraph{Contributions.}
Our contributions are threefold:
\begin{enumerate}[itemsep=0cm, topsep=0mm]
    \item We introduce \NAME: Function-Preserving Transforms for Quantization (Figure \ref{fig:method}). \NAME includes three novel transformer FPTs that are designed to be both expressive and cheap.
    \item We show \NAME enables static INT4 quantization with minimal overhead. This provides a SOTA speed-up of up to $3.9\times$ over FP. \NAME requires no kernel-level changes.
    \item We show \NAME has an excellent accuracy-speed trade-off---it is performing on par or exceeding most prior work and only shows slightly lower accuracy compared to a method that is up to 29\% slower.
\end{enumerate}

\begin{figure*}
    \centering
    \includegraphics[width=1\textwidth]{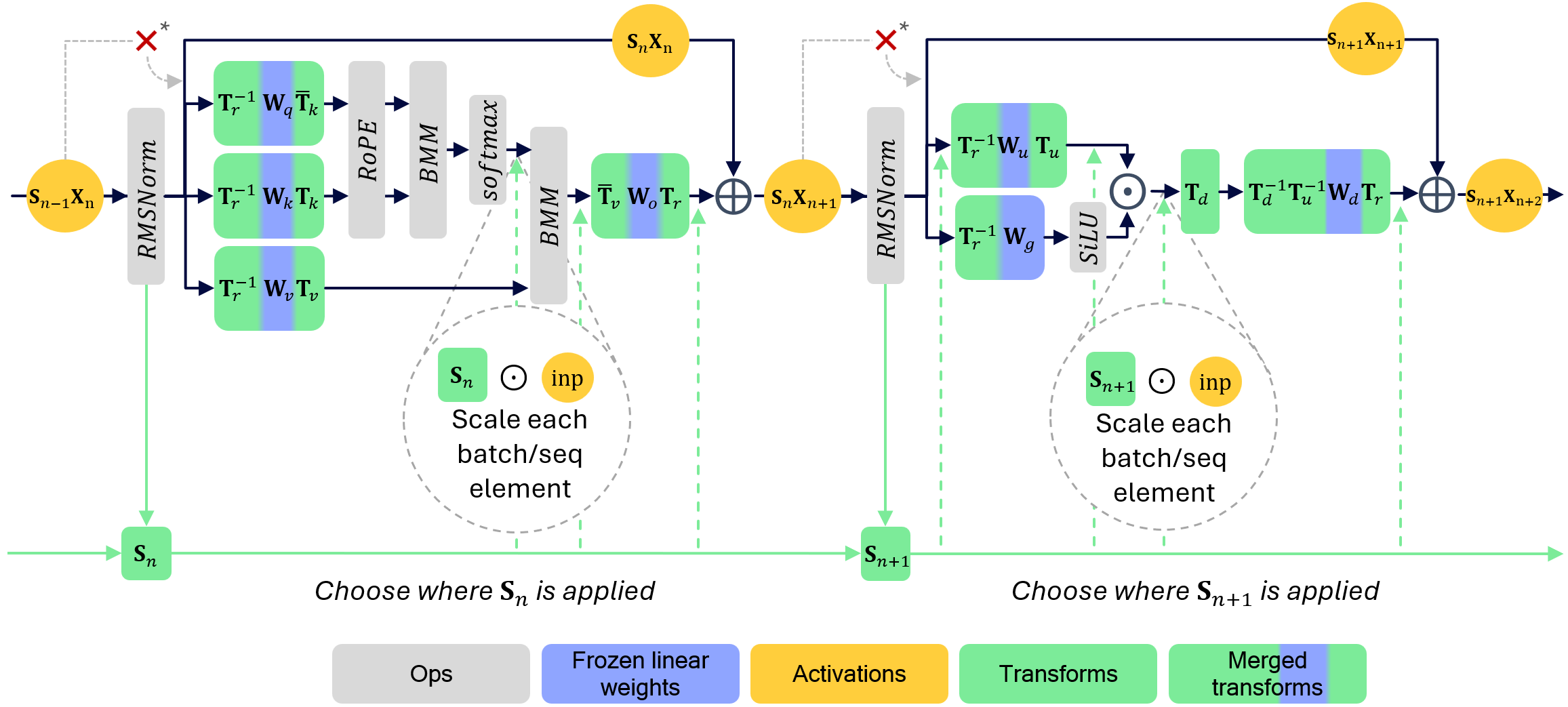}
    \caption{\small \textbf{\NAME}. \NAME consists of 6 transform types. $(\tT_k,\bar{\tT}_k)$ is a scale-and-rotate transform merged into the query and key weights; $(\tT_v,\bar{\tT}_v)$ consists of invertible matrices per head merged into value and output weights; transforms $\{\tS_n\}_{n=1}^N$ ($N=2\times$number of transformer blocks for typical LLMs) are per-token scalers applied to the residual and within the attention and MLP blocks. The scales $\tS_n$ are computed by existing RMSNorms, and in practice means now the RMSNorm is also applied to the residual (versus the original network, see $\textcolor{darkred}{\bm{\times}}^*$). We also use $(\tT_u,\tT^{-1}_u)$ a per-channel scaler merged into up and down projection weights similar to~\citep{hu_ostquant_2025}, partly online Hadamard transform $\tT_d$ \citep{ashkboos_quarot_2024} and mergeable rotation matrix $\tT_r$ \citep{liu_spinquant_2024}.}
    \label{fig:method}
\end{figure*}

\section{Related work}


\paragraph{Quantization}
Neural network quantization has been demonstrated as an effective technique for reducing the model size and improving computational efficiency~\citep{krishnamoorthi2018quantizing,nagel_white_2021}.
%
Quantization methods can generally be categorized into post-training quantization (PTQ) and quantization-aware training (QAT) families.
%
PTQ algorithms take a pretrained high precision network and convert it directly into a fixed-point network without the need for the original training pipeline
~\citep{banner2018post,cai2020zeroq,choukroun2019low,hubara2020improving,meller2019same,zhao2019improving,Nagel_2019_ICCV,nagel_up_2020,li2021brecq}. 
%
These methods are data-free or only require a small calibration dataset, and are generally fast and easy to use.
%
Quantization-aware training (QAT) methods
~\citep{gupta2015deep,jacob2018quantization,lsq,nagel_oscillations_2022}
simulate quantization during training, allowing the model to find more optimal solutions compared to PTQ.
However, they generally require longer training times, increased memory usage, need for labeled data and hyperparameter tuning.

\paragraph{LLM quantization}
%
The excessive training cost and memory usage of traditional QAT methods make them less suitable for quantizing modern LLMs.
%
Notably, ParetoQ~\citep{liu_paretoq_2025} is the only work we are aware of that scales QAT to billions of tokens.
%

%
Post-training quantization of LLMs is a challenging task due to presence of strong numerical outliers in weights and activations~\citep{bondarenko_understanding_2021,kovaleva_bert_2021,dettmers_gpt3_int8_2022,bondarenko_quantizable_2023,sun_massive_2024}.
Various strategies have been explored at tackling these difficulties. 
%
These include employing second-order information to mitigate the quantization error~\citep{frantar_gptq_2022}; emphasizing the importance of so-called ``salient'' weights that correspond to high-magnitude activations~\citep{dettmers2023spqr,lin2023awq,lee2024owq};
separating outliers and use mixed-precision~\citep{kim2023squeezellm,huang2024slim,egiazarian2024extreme}.
Some of the other LLM PTQ methods include~\citep{jeon2023frustratingly,lee2023flexround,luo2023long,chee2024quip}.
Note that many of these PTQ techniques focus primarily on weight quantization and memory size reduction.

\paragraph{Function-preserving transformations}
\label{sec:related_work_transforms}
\citet{Nagel_2019_ICCV} explored the idea of FPTs for CNN quantization, observing that ReLU and per-channel scaling commute, which allows scaling of weights across different layers. 
In the context of LLMs, \citet{xiao_smoothquant_2024} observe that activation quantization is harder than weight quantization due to more outliers. They propose migrating problematic outliers from the activations to the weights, using an online per-channel scaling factor for activations going into linear layers. \citet{wei_outlier_2023} add a shift to the scaling, and use a grid search to find a scaler that minimizes the mean-squared error per linear layer. \citet{shao_omniquant_2024} extend this by including scaling vectors for queries and keys, and using gradient descent to minimize the error per transformer block. 

\citet{chee2024quip} considered transforms that mix channels, albeit only for weight quantization.
QuaRot~\citep{ashkboos_quarot_2024} shows randomized Hadamard transforms (RHTs) are effective at reducing outliers. SpinQuant~\citep{liu_spinquant_2024} shows that different RHTs perform very differently, yet they cannot be optimized. They extend QuaRot by adding two unconstrained rotation matrices, which are trained to minimize the standard causal LM loss. Critically, these rotation matrices are placed such that they can be merged into weights post-training, negating inference cost.
\citet{lin_duquant_2024} use online rotations consisting of fixed channel permutations and block diagonal rotations. OSTQuant~\citep{hu_ostquant_2025} combined scaling vectors and rotations.
Recently, FlatQuant~\citep{sun_flatquant_2024} introduced matrix multiplications with a Kronecker product of two smaller matrices. This provides a transform that is both optimizable, and theoretically cheap to compute. 
In Appendix~\ref{app:transform_comparison} we summarize the associated costs for various transforms and an in-depth comparison of transforms in prior work.

\section{Method}

\subsection{Transforms}
\textbf{We argue equivariances and independencies in pretrained models are key to developing better FPTs, and should be explicitly exploited.}
Where a candidate FPT is equivariant w.r.t. pretrained model operations, we have the freedom to choose whether to apply it before, or after said operation. 
This can also influence whether the operation is mergeable. For example, \citet{ashkboos_slicegpt_2024} used the equivariance $\RMSNorm(\tX\tM)=\RMSNorm(\tX)\tM$ for orthogonal $\tM$, to apply a rotation matrix to the residual of LLMs, merging the transform and its inverse into the linear layers of each transformer block. This is a powerful transform, yet it incurs no compute overhead.
Understanding equivariances and independencies in networks is thus essential for finding optimal trade-offs between expressivity (\ref{des:expressivity}) and inference cost \ref{des:cost}. In this section, we will discuss three equivariances, and how these offer three novel transforms.

\subsubsection{Pre-RoPE transform (mergeable)}
\label{sec:rope-transform}

Reducing the bit width of KV cache and queries can significantly reduce memory footprint and computational cost of attention, especially with longer context windows. Unfortunately, we cannot naively merge transforms into the query and key projection weights, because modern LLMs use RoPE positional encodings \citep{su_roformer_2023} (see Appendix~\ref{app:proof_prerope}). We introduce a pair of pre-RoPE transforms $(\tT_k$, $\bar{\tT}_k)$, where $\tT_k$ is applied to keys and $\bar{\tT}_k$ can be interpreted as an inverse of $\tT_k$, applied to the queries. The transforms consist of scaled $2\times2$ rotation matrices, and applying these to the query and key weights Pre-RoPE, the attention output remains unchanged. For simplicity we first assume a single attention head. Denoting $i,j\in\NN$ as the token indices and RoPE applied to queries and keys as function $f:\RR^d\times \NN\to\RR^d$ with $f(\bx,i)=\bx \tR^{d_{head}}_{\Theta,i}$ (see details Appendix~\ref{app:proof_prerope}), the following holds:

\begin{theorem} \label{theorem:prerope}
    Let $N=d_{head}/2$, and $\tR_n\in O(2)$ and $s_n\in \RR$, for $n=1,...,N$. Define $\tT_k = \diag(\bs)\diag(\{\tR_n\}_{n=1}^N)$ and $\bar{T}_k=\diag(\bs^{-1})\diag(\{\tR_n\}_{n=1}^N)$. Given query and key weights $(\tW_q, \tW_k)\in\RR^{d_{in}\times d_{head}}$, define $\tilde{\tW}_q=\tW_q \bar{\tT}_k$ and $\tilde{\tW}_k=\tW_k \tT_k$. Now it holds:
    $$
    \langle f(\bx_i \tilde{\tW}_q,i),f(\bx_j \tilde{\tW}_k,j)\rangle=\langle f(\bx_i \tW_q,i),f(\bx_j \tW_k,j)\rangle
    $$
\end{theorem}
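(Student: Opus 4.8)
The plan is to rewrite the claimed equality as a single block-diagonal matrix identity and then exploit that $2\times2$ rotations commute. Since $f(\bx,i)=\bx\,\tR^{d_{head}}_{\Theta,i}$, both sides of the claim are bilinear forms in $\bx_i,\bx_j$: writing $\tR_i:=\tR^{d_{head}}_{\Theta,i}$,
\[
\langle f(\bx_i\tilde{\tW}_q,i),f(\bx_j\tilde{\tW}_k,j)\rangle=\bx_i\,\tilde{\tW}_q\,\tR_i\,\tR_j\T\,\tilde{\tW}_k\T\,\bx_j\T ,
\]
and likewise with $\tW_q,\tW_k$ in place of $\tilde{\tW}_q,\tilde{\tW}_k$. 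So it is enough to prove the $d_{in}\times d_{in}$ matrix identity $\tilde{\tW}_q\,\tR_i\,\tR_j\T\,\tilde{\tW}_k\T=\tW_q\,\tR_i\,\tR_j\T\,\tW_k\T$, which then holds for every choice of $\bx_i,\bx_j$.

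Next I would invoke the RoPE structure recalled in Appendix~\ref{app:proof_prerope}: $\tR_i\,\tR_j\T=\tR_{i-j}$ is itself block diagonal, its $n$-th $2\times2$ block being the planar rotation $\tR((i-j)\theta_n)$ (immediate, since each block of $\tR_i$ is a planar rotation and $\tR(\alpha)\tR(\beta)\T=\tR(\alpha-\beta)$). Substituting $\tilde{\tW}_q=\tW_q\bar{\tT}_k$ and $\tilde{\tW}_k=\tW_k\tT_k$, the target collapses to
\[
\bar{\tT}_k\,\tR_{i-j}\,\tT_k\T=\tR_{i-j}.
\]

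Because $\bar{\tT}_k$, $\tR_{i-j}$ and $\tT_k$ are all block diagonal with matching $2\times2$ blocks, I would verify this block by block. The $n$-th block of the left-hand side is $\p{s_n^{-1}\tR_n}\,\tR((i-j)\theta_n)\,\p{s_n\tR_n}\T=\tR_n\,\tR((i-j)\theta_n)\,\tR_n\T$, since the scalars $s_n^{-1}$ and $s_n$ commute through and cancel. As planar rotations commute and $\tR_n$ is orthogonal, this equals $\tR((i-j)\theta_n)\,\tR_n\tR_n\T=\tR((i-j)\theta_n)$, i.e.\ exactly the $n$-th block of $\tR_{i-j}$. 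Reassembling the blocks gives the displayed identity, hence the two bilinear forms agree. The multi-head case then follows with no extra work, since attention applies RoPE and forms the query--key product independently within each head, so the single-head argument applies head by head.

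The only non-bookkeeping step is this block-level cancellation. One must keep straight that the per-channel scaling $\diag(\bs)$ enters each $2\times2$ block as a scalar (so it cancels cleanly between $\bar{\tT}_k$ and $\tT_k\T$) and that $\tR_n$ commutes with every RoPE block $\tR((i-j)\theta_n)$ --- which is precisely why $2\times2$ rotation blocks are the natural object here. Commutativity is automatic when $\tR_n\in SO(2)$; a reflection component would instead conjugate $\tR((i-j)\theta_n)$ to its transpose and spoil the cancellation, so the argument genuinely relies on the rotational part of $O(2)$. Everything else is routine algebra.
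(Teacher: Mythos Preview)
Your proof is correct and follows essentially the same route as the paper's: both exploit the shared $2\times2$ block-diagonal structure to reduce to the fact that scalars cancel and planar rotations commute, with the only cosmetic difference that the paper commutes $\tT_k$ and $\bar{\tT}_k$ past $\tR_{\Theta,i}$ and $\tR_{\Theta,j}$ separately (then uses $\bar{\tT}_k\tT_k\T=I$) rather than first collapsing to $\tR_{i-j}$ as you do. Your observation that the cancellation genuinely needs $\tR_n\in SO(2)$ rather than the stated $O(2)$ is exactly right---the paper makes the same point in a remark immediately after its proof.
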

See Appendix~\ref{app:proof_prerope} for the proof. In practice, for multi-head attention and grouped-query attention, we can choose an independent transform for each key head. Assuming there are $H$ key heads and $mH$ query heads for some $m,H\in \NN$ ($m=1$ for standard multihead attention), this means we have $H$ independent transforms as above. For the more typical grouped query-attention ($m>1$), each key head is attended to by multiple query heads, hence we need to repeat the corresponding $\tT_k$ transform across these heads. Generally, we can thus write:
\begin{align} \label{eq:prerope_grouped_query}
    \bs^{(h)}&\in \RR^d, \tR^{(h)}_n\in O(2),\quad \forall h,n \\
    \tT^{(h)}_k &= \diag( \bs^{(h)})\diag(\{\tR_n^{(h)}\}_{n=1}^N), \\
    \tT_k &= \diag(\{\tT_k^{(h)}\}_{h=1}^H)\\
    \bar{\tT}_k &= \diag(\underbrace{\bar{\tT}^{(1)}_{k}, ...,\bar{\tT}^{(1)}_{k}}_{m\times},\bar{\tT}_k^{(2)},...,\bar{\tT}_k^{(H)}),
\end{align}

\paragraph{Applies to:} any model with RoPE embeddings directly after $\tW_k,\tW_q$ (e.g. all Llama models). For models with no positional embedding (NoPE), transform $\tT_k$ is valid, but it could be more expressive---it can be chosen freely per head with invertibility as the only constraint. For models with an RMSNorm between $\tW_k,\tW_q$ and RoPE, $\tT_k$ would be valid if and only if the RMSNorm does not also have a channel scaling vector, since the latter would not commute with the $\tT_k$ rotations.

\subsubsection{Multihead value transform (mergeable)}
Note that the attention probabilities $\tA$ are of shape $(B,mH, l^1,l^2)$ and the values are of size $(B, mH, l^2,d)$. The batched matmul (BMM) multiplies these per sample, head, and token, and sum this over $l^2$. Note $d$ plays no role in this BMM, consequently we are free to apply any invertible transform to the $d$ axis---in particular, for a single head and any invertible matrix $\tT$, the attention block output does not change upon merging $\tT$ as follows: $(\tA(\tX\tW_v))\tW_o=(\tA(\tX (\tW_v \tT)))(\tT^{-1}\tW_o)$. Note that the different heads in the values are independent, hence we can apply a different transform to each attention head. Newer models use grouped-query attention, which requires a bit of bookkeeping: we need to repeat the inverses per key head, across the corresponding softmax heads. Assuming there are again $H$ value heads (repeated to $mH$ heads) and $mH$ query heads, we can choose any invertible $\tT_v^{(h)} \in \RR^{d\times d}$, and set:
\begin{align}
    \tT_v &= \diag(\{\tT_v^{(h)}\}_{h=1}^H), \\
    \bar{\tT}_v &= \diag(\underbrace{(\tT_v^{(1)})^{-1}, ...,(\tT_v^{(1)})^{-1}}_{m\times},(\tT_v^{(2)})^{-1},...,(\tT_v^{(H)})^{-1}), \nonumber 
\end{align}
which are merged into respectively $\tW_v$ and $\tW_o$ weights.

\paragraph{Applies to:} all standard attention layers.

\subsubsection{Pseudodynamic residual scaling} \label{sec:dynamic_scaler}
In modern transformer blocks, the residual remains unnormalized---i.e. the LayerNorm or RMSNorm that we apply to the input of the attention and FFN blocks, is not applied to the residual. In practice this means each token of the residual can have a vastly different scale and is difficult to quantize. Even if we do not quantize the residual, this implies that changing the residual representations of tokens $i$ and $j$ may require vastly different scales of the output of the attention and FFN blocks if the norm of $i$'s residual different than that of $j$'s. This may for example explain why the output of the SwiGLU in the FFN (i.e. input of $\tW_d$) has serious outliers, see \citep{bondarenko_quantizable_2023} and Appendix~\ref{app:quant_error}, and that subsequent blocks can have similar outlier patterns, see Figure \ref{fig:dynscale_intuition}(a,b). 

Quantization could thus be improved if only the residual was normalized. Fortunately, this can be achieved at virtually no cost, without changing the output of the pretrained network. Moreover, we can apply a similar scaler inside the transformer blocks; this can reduce outliers for intra-block outlier patterns (Figure \ref{fig:dynscale_intuition}c).

\begin{figure*}[bt]
    \centering
    \begin{subfigure}[t]{0.33\textwidth}
    \includegraphics[clip,trim=0 0 7.7cm 0.5cm,height=4.3cm]{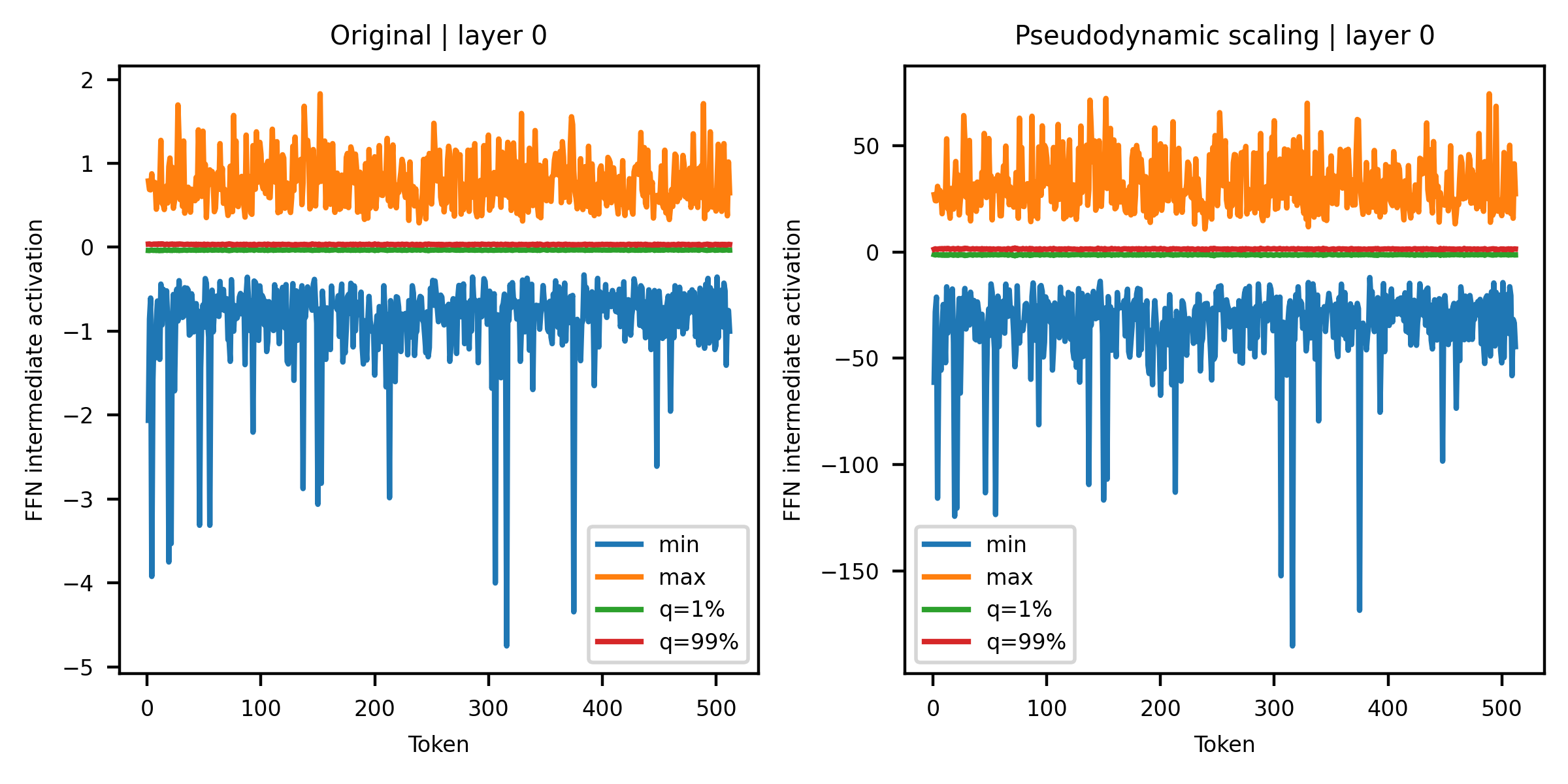}
    \subcaption{\centering Block 0 -- Original \\ Kurtosis: 3128}
    \end{subfigure}
    \begin{subfigure}[t]{0.33\textwidth}
    \includegraphics[clip,trim=0 0 7.7cm 0.5cm,height=4.3cm]{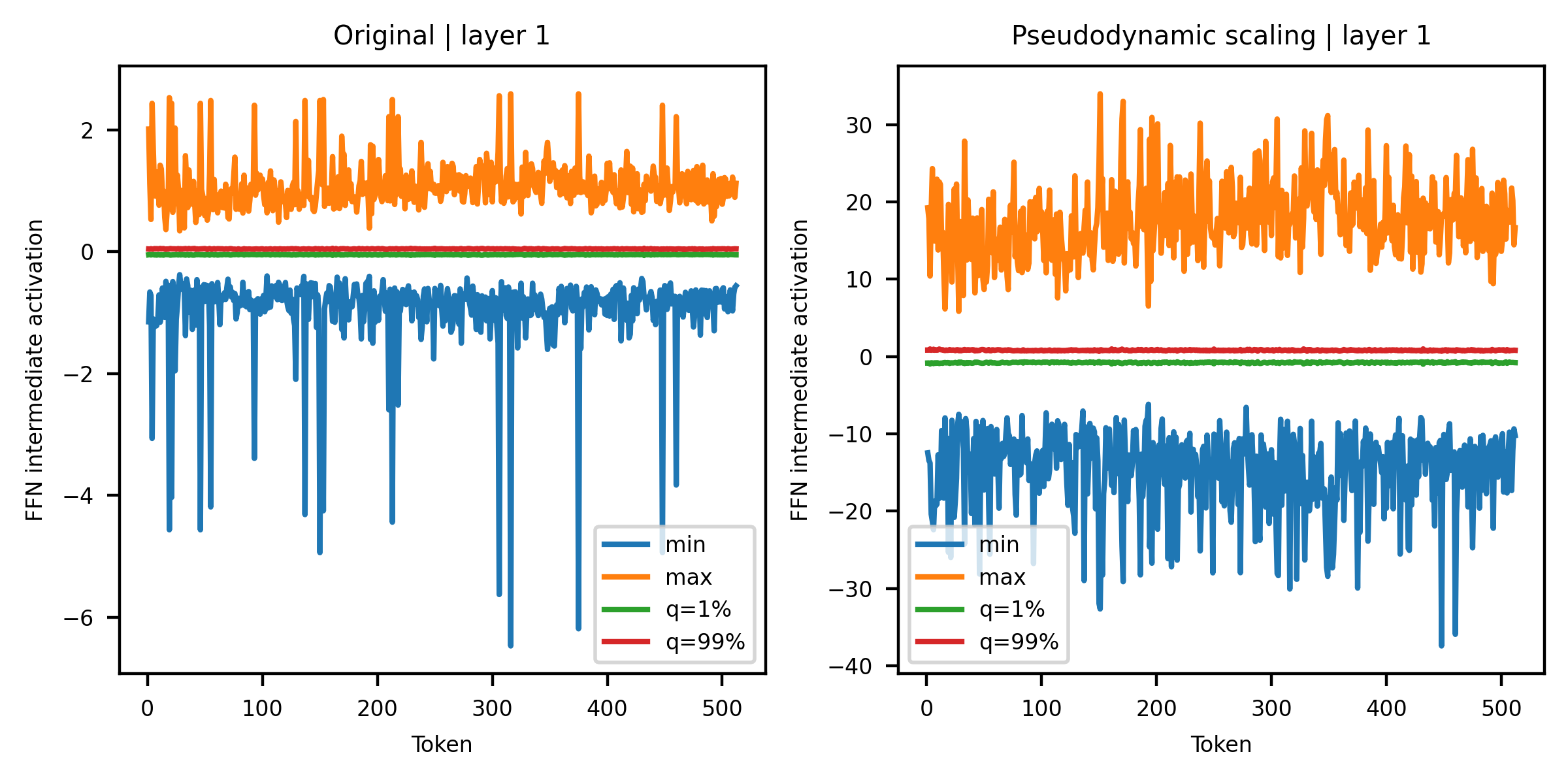}
    \subcaption{\centering Block 1 -- Original \\ Kurtosis: 2278}
    \end{subfigure}
    \begin{subfigure}[t]{0.33\textwidth}
    \includegraphics[clip,trim=7.5cm 0 0 0.5cm,height=4.3cm]{figures/dynscale_layer_1.png}
    \subcaption{\centering Block 1 -- With $\textbf{S}_n$ \\ Kurtosis: 544 \textbf{(-76.1\%)}}
    \end{subfigure}
    \caption{\textbf{Dynamic scaling reduces intra-block outliers.} We plot the intermediate FFN activations of Llama-3.2 3B-it in the first two blocks (not visualizing the massive [BOS] outlier \citep{sun_massive_2024}). In the zeroth block (a), there are some tokens with serious outliers. These outliers are absorbed into the residual, and we observe (b) the same tokens cause outliers one block later. By applying $\textbf{S}_1$ (scaling the FFN intermediate layer by the previous residual norm), we see (c) that the outliers are significantly reduced.}
    \label{fig:dynscale_intuition}
\end{figure*}

\paragraph{Step 1: move RMSNorm.} Let us index all the blocks in the transformer with $n=1,...,N$, where we index the attention and MLP blocks separately (i.e. typically, $N$ equals two times the number of LLM transformer blocks). Let $\tX_n$ denote the residual that bypasses a block, $\tY_n$ the output of a block, and $\tZ_n=\tX_n+\tY_n$. Note, normally $\tX_{n+1}=\tZ_n$, and the transformer's final output is $\tZ_N$. We move the RMSNorm, such that it is applied to the residual too. Let us use $\tilde{\tX}_n$ to denote the new residuals. Moving the RMSNorm implies that the residuals are now scaled by a matrix $\tS_n=\mathbf{1}\oslash||\tX_n||_R$ of shape (batch, sequence length), where $\oslash$ denotes an element-wise division and $||\cdot||_R$ denotes the root-mean-square along the last dimension, $||\cdot||_R:\bx\mapsto \frac{1}{\sqrt{d}}||\bx||_2$. In other words, $\tilde{\tX}_n=\tS_n \odot \tX_n$, with $\odot$ denoting the element-wise multiplication along the dimensions of $\tS_n$.

\paragraph{Step 2: rescale outputs feeding back into residuals.} We do not want to change the network's final output. To ensure this, we need to make sure that anything that feeds back into the residual is rescaled to the new normalized representation. We rescale the outputs $\tY_n$ using the same scales, i.e. ensure:
\begin{equation} \label{eq:y}
\tilde{\tY}_n=\tS_n\odot \tY_n,
\end{equation}
which then gives $\tilde{\tZ}_n=\tilde{\tX}_n+\tilde{\tY}_n=\tS_n \odot \tZ_n$.

Note that (i) matrix multiplication, (ii) linear layers without bias\footnote{We have not found any modern LLMs that use bias for the out and down projection layers.}, and (iii) BMM all commute with a scaler on the batch/sequence dimension. Consequently, we have a choice where we apply the scale, see Figure \ref{fig:method}. Importantly, this means we can apply the rescaling far into the attention and MLP blocks, which we find reduces quantization error within these blocks.

\paragraph{Computing $\tS_n$.} Note that for $n>1$, $\tS_n=\textbf{1}\oslash||\tX_n||_R =\textbf{1}\oslash||\tZ_{n-1}||_R= \textbf{1}\oslash||\tilde{\tZ}_{n-1} \oslash \tS_{n-1}||_R=\tS_{n-1}\oslash||\tilde{\tZ}_{n-1}||_R$. The right-hand side means we can compute  $\tS_n$ based on $\tilde{\tZ}_{n-1}$, instead of needing to rescale the residual first back to $\tZ_{n-1}$ explicitly. We get the recursive relationship:
\begin{align}
    \tS_0&={\bf 1} \text{ and }\tilde{\tZ}_{0}=\tX_1 \\
    \tS_n &= \tS_{n-1} \oslash||\tilde{\tZ}_{n-1}||_R  \qquad n=1,...,N
\end{align}

\paragraph{\st{Step 3: rescale transformer output} (in practice not needed).} Note that $\tilde{\tZ}_N=\tS_n \odot \tZ_N$. To ensure we get the same output as the original network, we should divide the very last output by $\tS_n$. In practice we do not need to: the transformer is followed by the LM head, which starts with an RMSNorm and hence removes the norm automatically.

\paragraph{Applies to:} all transformers with RMSNorms before their attention and FFN blocks (i.e. all modern transformers).

\subsubsection{Other transforms}
In addition to these new transforms, \NAME uses a rotation matrix $\tT_r$ for rotating the residuals, since this is completely mergeable and effective at reducing activation quantization error~\citep{liu_spinquant_2024}. Additionally, the notoriously bad activation quantization error at the down projection input (see Appendix~\ref{app:quant_error} and Table 3 in \citep{liu_spinquant_2024}) warrants an online transform here; we use a Hadamard transform as in \citep{ashkboos_quarot_2024, chee2024quip}, because it is cheap (Table \ref{tab:comparing_transforms}). We further use a per-channel scaler transform $\tT_u$ that we merge into $\tW_u$ and $\tW_d$, similar to \citep{hu_ostquant_2025}, which effectively rescales the channels before the Hadamard transform mixes them. An illustration of all our transforms applied to a typical transformer block is shown in Figure~\ref{fig:method}.

\subsection{Optimization}
\subsubsection{Local optimization} \label{sec:local_optim}
To reduce the worst outliers, we optimize all transforms first locally and independently---this improves subsequent end-to-end training (Appendix~\ref{app:ablation_learning}). We minimize the $L_p$ norm of each transform's merged weights and use gradient descent. For example, letting $\mathcal{W}_{in}=\{\tW^i_q,\tW^i_k,\tW^i_v,\tW^i_{u},\tW^i_{g}\}$ and $\mathcal{W}_{out}=\{\tW^i_{o},\tW^i_{d}\}$, for the residual rotation we optimize:

\begin{equation} \label{eq:local_optimization_r1}
    \underset{\tT_r}{\min} \sum_{i=1}^{\#layers}\Big[\sum_{\tW\in \mathcal{W}_{in}}||\tT_r^{-1}\tW||_p
    + \sum_{\tW\in \mathcal{W}_{out}}||\tW\tT_r||_p\Big],
\end{equation}
whilst for the PreRoPE transforms $\tT^i_k$ of layer $i$, parameterized by $\Phi^i$, we just minimize:
\begin{equation*}
    \underset{\Phi^i}{\min} ||\tW^i_q\bar{\tT}^i_k||_p + ||\tW^i_k\tT^i_k||_p.
\end{equation*}
Since $\tT_r$ affects all linear layers, we optimize it first (Eq \ref{eq:local_optimization_r1}). Locally optimized transforms are merged into the weights, after which the next transform is optimized and so forth. We set $p=4$, following \citep{bondarenko_low-rank_2024}, who showed $L_4$ is good for determining the quantization grid. Local optimization cost is negligible compared to end-to-end training (Appendix \ref{app:training_cost}).

\subsubsection{End-to-end optimization}
We follow \citep{liu_llm-qat_2024} and use student-teacher training for reducing the quantization error further, training the student $f_\Phi$ (quantized model with transforms) to approximate the teacher $f$ (unquantized FP model). The original model's weights are frozen and shared with the student, hence the student-teacher framework has no additional memory footprint. We use Jensen-Shannon Divergence loss:
\begin{equation}
    \min_\Phi \mathbb{E}_X[JSD[f(\tX),f_{\Phi}(\tX)],
\end{equation}
where $\Phi$ includes both the transformation and the quantization grid parameters. It is essential we include the latter---the grid cannot adapt to the transformed input otherwise. In Appendix \ref{app:sensitivity} we show that FPTQuant's parametrization is stable, i.e. that even with noisy training updates the function-preserving property (\ref{des:identity}) holds. 

The end-to-end student-teacher approach deviates from SpinQuant and FlatQuant. SpinQuant uses the LLM's original next-token prediction loss. Compared to next-token prediction, student-teacher training: 1) provides more signal (i.e., for each data point and sequence element, a full vector of probabilities, vs. a single label), and in turn this 2) decreases overfitting. This is an important reason to avoid next-token prediction loss: although we are working with transforms that in the absence of quantization do not change the model output, the combination of the large number of parameters $|\Phi|$ and the quantization non-linearities (i.e. rounding), actually provide the transformed and quantized model with enough capacity to overfit---see Appendix~\ref{app:student_teacher_ablation}. FlatQuant optimizes the mean squared error (MSE) per transformer block. This is not directly applicable for transforms that may affect multiple blocks at once, for example a rotation applied to the residual and merged into all linears, as used here and by \citep{ashkboos_quarot_2024,liu_spinquant_2024}.

\begin{figure*}[tp]
    \centering
    \includegraphics[width=0.9\textwidth]{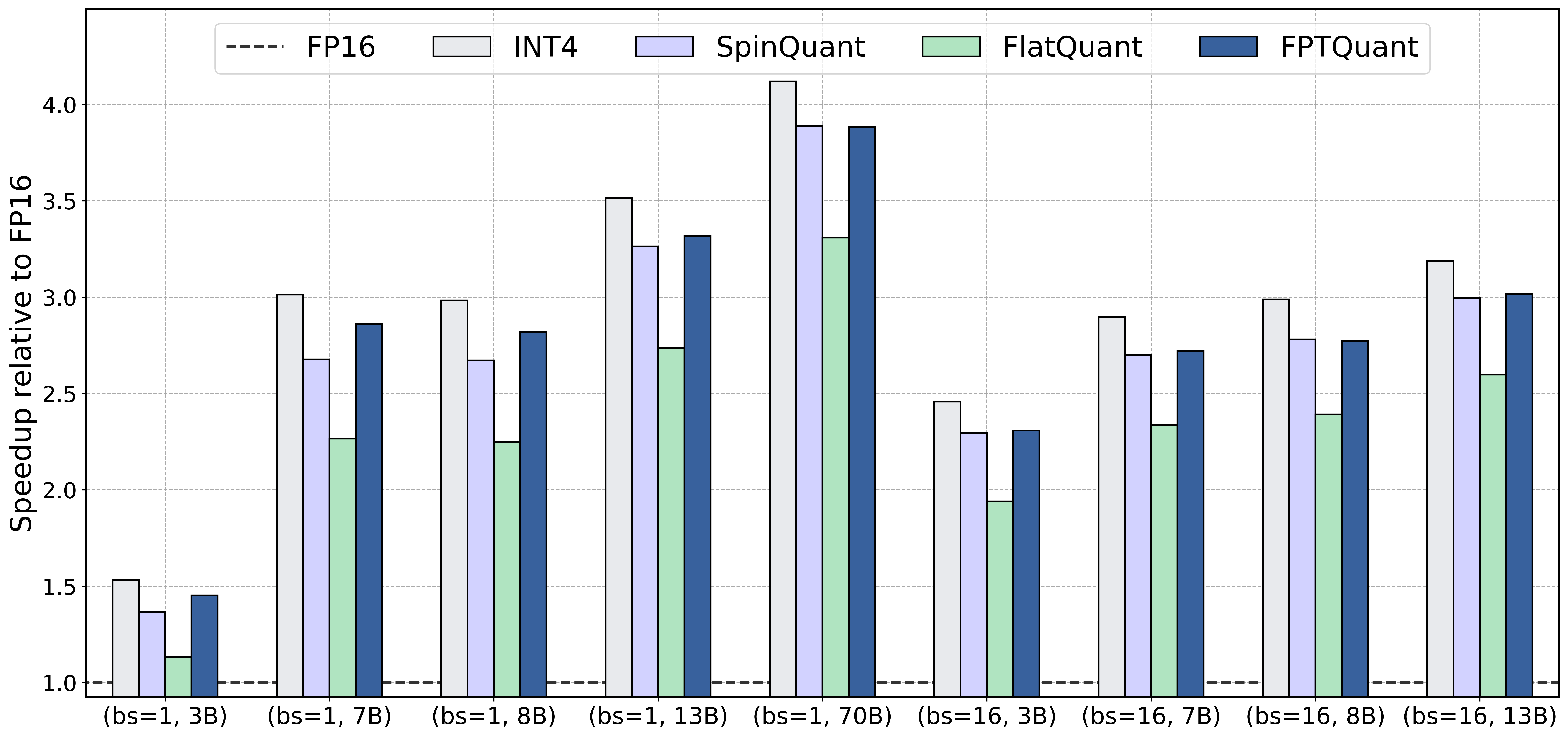}
    \caption{\small \textbf{Static INT4 prefill speedup ($\uparrow$)} of \NAME on a single transformer block of Llama across different model sizes (3B, 7B, 8B, 13B, and 70B), batch sizes 1 and 16 with 1024 sequence length.}
    \label{fig:04_speedup}
    
\end{figure*}
\section{Experiments}

\paragraph{Evaluation.} We choose a range of models and settings to evaluate \NAME. We use Llama-2 7B/13B~\citep{touvron_llama_2023} and Llama-3 8B \citep{grattafiori_llama_2024} to allow direct comparison to reported results from QuaRot, SpinQuant and FlatQuant. 
We add to this Llama-3.2 3B instruct---a newer and smaller model that is popular for edge devices.
Finally, we test other model families and bigger sizes including Ministral 8B~\citep{ministral} and Qwen-2.5 32B~\citep{yang_qwen25_2024}
.
%
We evaluate on Wikitext-2~\citep{merity_pointer_2017}, and use LM-harness to evaluate the same Common Sense Reasoning tasks used in FlatQuant: PIQA~\citep{bisk_piqa_2020}, WinoGrande~\citep{sakaguchi_winogrande_2021}, HellaSwag~\citep{zellers_hellaswag_2019}, ARC-e and ARC-c~\citep{clark_think_2018}, and LAMBADA~\citep{paperno_lambada_2016}. In Appendix~\ref{app:reasoning_std} we also include results for reasoning tasks (5-shot MMLU and GSM8K).

\paragraph{Baselines.} We compare \NAME against the original floating point model (FP), PTQ using rounding-to-nearest (RTN), RTN with optimizing the quantization ranges (RTN-opt), rotation-based QuaRot, SpinQuant, OSTQuant, and the more expensive but state-of-the-art FlatQuant.

\paragraph{Training Setup.} For fair comparison, we use the same training and compute budget for all methods---1024 steps with batch size 16 and sequence length 2048. We train on Wikitext-2. We found that end-to-end student-teacher training significantly improves generalization over next-token prediction (see Appendix~\ref{app:student_teacher_ablation}), hence choose to use this for the RTN-opt, SpinQuant and QuaRot baselines too.
For each method and quantization setup we select hyper-parameters based on the perplexity on the Wikitext-2 validation set.

\paragraph{Quantization Setup.}
Transforms help with low-bit activation quantization, hence we study different activation quantization settings---covering realistic deployment settings (Sec. \ref{sec:exp_settings}) and dynamic activation quantization (Sec. \ref{sec:exp_dynamic}).
We initialize quantization ranges based on $L_p$ norm of the error, and further improve the range during training (i.e. use learnable weight/activation clipping) for all methods except ``RTN". In Appendix \ref{app:gptq} we report results for GPTQ weight quantization.

\emph{Further details about hyperparameters and implementation are in Appendix~\ref{app:exp_details}}.


\subsection{\NAME is fast}
%

\paragraph{Setup.}
We evaluate the prefill\footnote{We do not evaluate generation time, since integer memory transfer and conversion is poorly supported on current Nvidia hardware. This means measuring decoding INT4 performance primarily measures how well the software stack handles INT4 emulation. For example, see \citep{sun_flatquant_2024} Figure 4b, which shows a slowdown of INT4 compared to FP16.} runtime performance of our method and compare against other methods using FTPs. We implement \NAME, SpinQuant, FlatQuant, and INT4 baseline using PyTorch CUDA/12.1 and using INT4 CUTLASS kernels from QuaRot repository\footnote{\url{https://github.com/spcl/QuaRot}}. Note that QuaRot and OSTQuant are about as fast as SpinQuant at inference time. QuaRot is slightly slower, since it has an extra Hadamard transform applied to the head dimension (before $\tW_o$); OSTQuant is marginally slower, since it uses online smoothing vectors after the RMSNorm (SpinQuant/FPTQuant merge these into linear layers).
For all methods we assume static INT4 quantization.
All the measurements are conducted on NVIDIA RTX 3080 Ti.
We provide all our experiments on a single transformer block as the whole model does not fit on a single GPU for big enough model size and/or the batch size.
We repeat each measurement 1000 times and report the mean speedup relative to FP16 baseline.
\textit{More details and additional results with using dynamic quantization are in Appendix~\ref{app:detailed_benchmarking_results}.}

%

%
%
\paragraph{Results.}
Figure~\ref{fig:04_speedup} shows the prefill speedup of \NAME across different batch sizes and model sizes.
For most configurations, we get $2.8$--$3.9\times$ speedup over the FP16 implementation, which is significantly faster than prior reported speedups of QuaRot and FlatQuant.
The speedup is consistently increasing with model size, as the computation becomes the main bottleneck.
\NAME is on par or faster than SpinQuant and consistently faster than FlatQuant,
with a relative speedup of 15--29\%. \NAME is also faster to train, see Appendix~\ref{app:training_cost}.
In all cases \NAME is within a 5--6\% to the INT4 upper bound.
%


\begin{table}[bt]
\centering
\caption{\small \textbf{\NAME excels for harder activation quantization settings.} Exploring different activations quantization settings on Llama-3.2 3B instruct. Left easy, right hard. (i) \textit{Linears+KV} is a popular setting and simplest, (ii) also quantizes the BMM inputs (queries and softmax output), and (iii) includes all activations except for the residual. We report Wikitext perplexity---see Appendix~\ref{app:table_setting_longer} for 0-shot performance and more models.}
\label{tab:settings}
\setlength{\tabcolsep}{2mm}
{\resizebox{0.95\columnwidth}{!}{
\begin{tabular}{c|l||ccc}
\toprule
\textbf{\#Bits} $_\text{(W-A-KV)}$    & \textbf{Method} & \textbf{(i)} & \textbf{(ii)} & \textbf{(iii)} \\ \hline
16-16-16                            &  FP16               & {{10.48}} &{{10.48}} & {{10.48}}                                                                                                   \\
\hline
\multirow{3}{*}{4-8-8} & SpinQuant       & 11.71                                   & 10.88                                   & 11.73                                            \\
                                & FlatQuant       & 10.68                                   & 10.68                                   & 11.49                                            \\
                                \rowcolor{gray!20} \cellcolor{white}& \textbf{\NAME}  & 10.78                                   & 10.56                                   & 10.99                                            \\
                                \hline
\multirow{3}{*}{4-4-4} & SpinQuant       & 12.71                                   & 13.16                                   & 20.13                                            \\
                                & FlatQuant       & 11.38                                   & 12.30                                   & 18.60                                            \\
                                \rowcolor{gray!20} \cellcolor{white}& \textbf{\NAME}  & 11.71                                   & 13.99                                   & 17.17                                        \\ \bottomrule  
\end{tabular}


}}
\vspace{-2mm}
\end{table}

\begin{table*}[hbt]
\renewcommand\arraystretch{1}
\centering
\caption{\small \textbf{Static quantization.} Comparison of the perplexity score on WikiText-2 \citep{merity_pointer_2017} and averaged accuracy on 6 Zero-shot Common Sense Reasoning tasks for Llama-2 7B (L2-7B), Llama-3.2 3B instruct (L3.2-3B it), Llama-3 8B (L3-8B), Ministral 8B instruct (M-8B it), Qwen-2.5 32B (Q2.5-32B).}
\label{tab:main_static}
\vspace{-2mm}
\setlength{\tabcolsep}{2mm}
{\resizebox{0.9\textwidth}{!}{

\begin{tabular}{c|l||cc|cc|cc|cc|cc} \toprule
 &  & \multicolumn{2}{c|}{\textbf{L2-7B}} & \multicolumn{2}{c|}{\textbf{L3.2-3B it}} & \multicolumn{2}{c|}{\textbf{L3-8B}} & \multicolumn{2}{c|}{\textbf{M-8B it}} & \multicolumn{2}{c}{\textbf{Q2.5-32B}} \\
\textbf{\#Bits} & \textbf{Method} & Wiki & 0-shot$^6$ & Wiki & 0-shot$^6$ & Wiki & 0-shot$^6$ & Wiki & 0-shot$^6$ & Wiki & 0-shot$^6$ \\
$_\text{(W-A-KV)}$ &  & ($\downarrow$) & Avg.($\uparrow$) & ($\downarrow$) & Avg.($\uparrow$) & ($\downarrow$) & Avg.($\uparrow$) & ($\downarrow$) & Avg.($\uparrow$) & ($\downarrow$) & Avg.($\uparrow$) \\ 
\hline
\!\!16-16-16\! &	FP16	&	5.47	&	69.79	&	10.48	&	65.63	&	5.75	&	73.33	&	6.45 & 74.37	&	4.67	&	75.29	\\
\hline
\multirow{7}{*}{4-8-8}
	&	RTN	&	73.0	&	47.75	&	40.6	&	47.27	&	77.7	&	45.00	&	5.5e3	&	31.02	&	6.83	&	70.28	\\
	&	RTN-opt	&	7.11	&	56.93	&	11.20	&	61.09	&	7.32	&	67.35	&	10.13	&	51.97	&	5.72 	&	75.15 	\\
	&	QuaRot-opt	&	6.22	&	63.43	&	10.89	&	63.12	&	7.04	&	67.60	&	6.76	&	73.41	&	5.28	&	76.24	\\
	&	SpinQuant	&	5.97	&	66.01	&	11.03	&	63.28	&	6.54	&	71.60	&	6.86	&	72.48	&	5.28	&	75.51	\\
	&	OSTQuant	&	6.49	&	61.85	&	11.05	&	62.48	&	6.56	&	71.46	&	6.82	&	72.87	&	5.28	&	75.96	\\
	&	FlatQuant	&	6.46	&	62.07	&	10.67	&	65.04	&	6.20	&	72.11	&	6.69	&	73.41	&	5.05	&	76.22	\\
\rowcolor{gray!20}\cellcolor{white}	&	\textbf{\NAME}	&	5.85	&	65.96	&	10.65	&	64.00	&	6.27	&	72.72	&	6.72	&	73.59	&	5.12	&	77.51	\\
\hline
\multirow{7}{*}{4-4-4}
 	&	RTN	&	2.4e3	&	39.13	&	2.2e3	&	29.17	&	1.6e5	&	37.67	&	1.7e5	&	29.33	&	1.8e6	&	29.83	\\
	&	RTN-opt	&	2.2e3	&	29.54	&	59.06	&	31.16	&	543	&	30.04	&	776	&	29.63	&	26.42	&	36.92	\\ 
	&	QuaRot-opt	&	1218	&	30.21	&	12.81	&	54.38	&	19.72	&	42.76	&	8.34	&	64.68	&	7.51	&	68.01	\\ 
	&	SpinQuant	&	940	&	30.17	&	12.71	&	54.88	&	11.04	&	54.58	&	8.69	&	60.60	& 7.59	 	& 69.12	 	\\ 
	&	OSTQuant	&	519	&	30.75	&	13.41	&	52.43	&	9.66	&	56.69	&	10.00	&	50.88	&	8.39	&	66.84	\\ 
	&	FlatQuant	&	106	&	29.90	&	11.38	&	61.00	&	9.55	&	61.43	&	8.44	&	63.51	&	6.57	&	73.05	\\ 
\rowcolor{gray!20}\cellcolor{white}	&	\textbf{\NAME}	&	603	&	29.76	&	11.71	&	59.46	&	9.74	&	52.96	&	8.49	&	63.69	&	6.24 & 72.92	\\ 
\bottomrule
\end{tabular}
}}
\end{table*}

 
 
\subsection{\NAME excels at hard deployment settings} \label{sec:exp_settings}
\paragraph{Motivation.} There is a large decision space when choosing which activations to quantize. Prior works \citep{ashkboos_quarot_2024,liu_spinquant_2024,sun_flatquant_2024} focus on dynamic quantization of linear inputs and KV cache. This deviates from LLM deployment in practice, which typically (i) has better support for static activation quantization (see Appendix~\ref{app:static_vs_dynamic} for details); and
(ii) quantizes more intermediate activations for better speed and memory footprint \citep{tan_mobilequant_2024,shen2024edgeqat}. In this experiment, we evaluate SpinQuant, FlatQuant, and \NAME for different static activation quantization settings on Llama-3.2 3B instruct. \textit{We observe similar behaviour for Llama-3 8B and Qwen-2.5 7B and zero-shot performance (Appendix~\ref{app:table_setting_longer}).}
\paragraph{Results.} We observe (Table~\ref{tab:settings}) that \NAME performs comparably to baselines for quantization settings with only linear inputs and KV cache quantize (i). \textbf{It excels at the most challenging setting (iii), in which all activations within the attention and MLP block are quantized.} \NAME slightly underperforms baselines when queries and keys are quantized to 4 bit (ii), since the Pre-RoPE transform has less capacity to reduce quantizers here than the non-mergeable transforms of SpinQuant ($R_3$) or FlatQuant ($P_h$).

\subsection{Main results}
\label{sec:exp_static}
\paragraph{Setup.} In the previous section we saw that \NAME outperforms baselines comfortably for the most realistic quantization settings. We extend our evaluation to more models and multiple bit widths for static  quantization, focussing on the setting that \NAME did \emph{relatively worst}: (i) only \textit{Linears+KV}. \textit{In Appendix~\ref{app:reasoning_std} we include standard deviations for a subset of these results, and include reasoning metrics MMLU and GSM8K.}

\paragraph{Results.} See Table \ref{tab:main_static}.
Similar to earlier results, \NAME almost always outperforms QuaRot and SpinQuant. OSTQuant generally performs poorly---this is due to OSTQuant not being strictly function-preserving, and having stability problems (Appendix \ref{app:sensitivity}).
In most cases \NAME shows competitive performance to the significantly slower FlatQuant. However, we do note that for the very challenging setup of W4A4KV4 the gap can be larger, especially for zero-shot accuracy. 
Note that FlatQuant with static quantization can be unstable in the optimization, e.g. Llama-2 W4A8KV8 is surprisingly bad. We explore this instability further in Section~\ref{sec:sensitivity} and Appendix~\ref{app:sensitivity}, where we do a sensitivity analysis of the transforms. Also note that for W4A8KV8, we outperform FlatQuant, because the mergeable FPTQuant transforms are better at reducing weight quantization error---at W4A8KV8, this is relatively more important, since A8 activation quantization is easier.

\begin{table}[bt]
    \setlength{\tabcolsep}{6pt}
    \centering
    \caption{\small \textbf{An impact of proposed transforms on Llama-3.2 3B it (W4A4KV4)}. For each setting, we tune the LR and select the best one based on validation Wikitext perplexity. We repeat each experiment 3 times and report mean and standard deviation. We report Wikitext perplexity, average 0-shot CSR, and 5-shot MMLU accuracies.
    }
    \label{tbl:04_ablation}
    \vspace{-.2cm}
    \resizebox{1\columnwidth}{!}{%
    \renewcommand{\arraystretch}{1.1}
    \begin{tabular}{l||ccc }
        \toprule
        \textbf{Transforms} & Wiki ($\downarrow$) & 0-shot$^6$ ($\uparrow$) & MMLU ($\uparrow$) \\
        \hline
        $\varnothing$	&	\ms{60.52}{1.46} &	\ms{31.79}{0.62	} &	\ms{24.96}{0.34} \\
        $\{\tT_d,\tT_r,\tT_u\}$        & \ms{12.78}{0.08} & \ms{54.50}{1.43} & \ms{35.46}{1.18} \\
        $\{\tT_d,\tT_r,\tT_u\}, \tS_n$ & \ms{12.57}{0.32} & \ms{55.38}{0.69} & \ms{36.88}{0.95} \\
        $\{\tT_d,\tT_r,\tT_u\},\tT_k$  & \ms{12.45}{0.25} & \ms{55.05}{1.12} & \ms{38.33}{1.09} \\
        $\{\tT_d,\tT_r,\tT_u\},\tT_v$  & \ms{11.84}{0.03} & \ms{58.34}{0.25} & \ms{41.54}{0.88} \\
        \rowcolor{gray!20}\textbf{\NAME} (all) & \ms{11.80}{0.07} &\ms{58.87}{0.74} & \ms{44.64}{0.25} \\
        \bottomrule
    \end{tabular}
    \renewcommand{\arraystretch}{1}
    }  
\end{table}

\subsection{Ablation of proposed transforms}
\paragraph{Setup.} We explore the value of each transform. Let us take Llama-3.2 3B-it and the same quantization setup as before, \textit{Linears+KV}, at W4A4KV4. We subselect different transforms and repeat the experiment for three seeds.
\textit{We include more ablations in Appendix \ref{app:ablation_transforms}, including comparisons of individual transforms to ``similar" transforms in literature.}

\paragraph{Results.} We find (Table \ref{tbl:04_ablation}) that each of the three transforms $\tS_n$, $\tT_k$, $\tT_v$ helps reduce the perplexity and improves the 0-shot CSR and 5-shot MMLU.

\subsection{Sensitivity analysis}
\label{sec:sensitivity}
\begin{figure}[tp]
    \centering
    \includegraphics[width=1.0\textwidth]{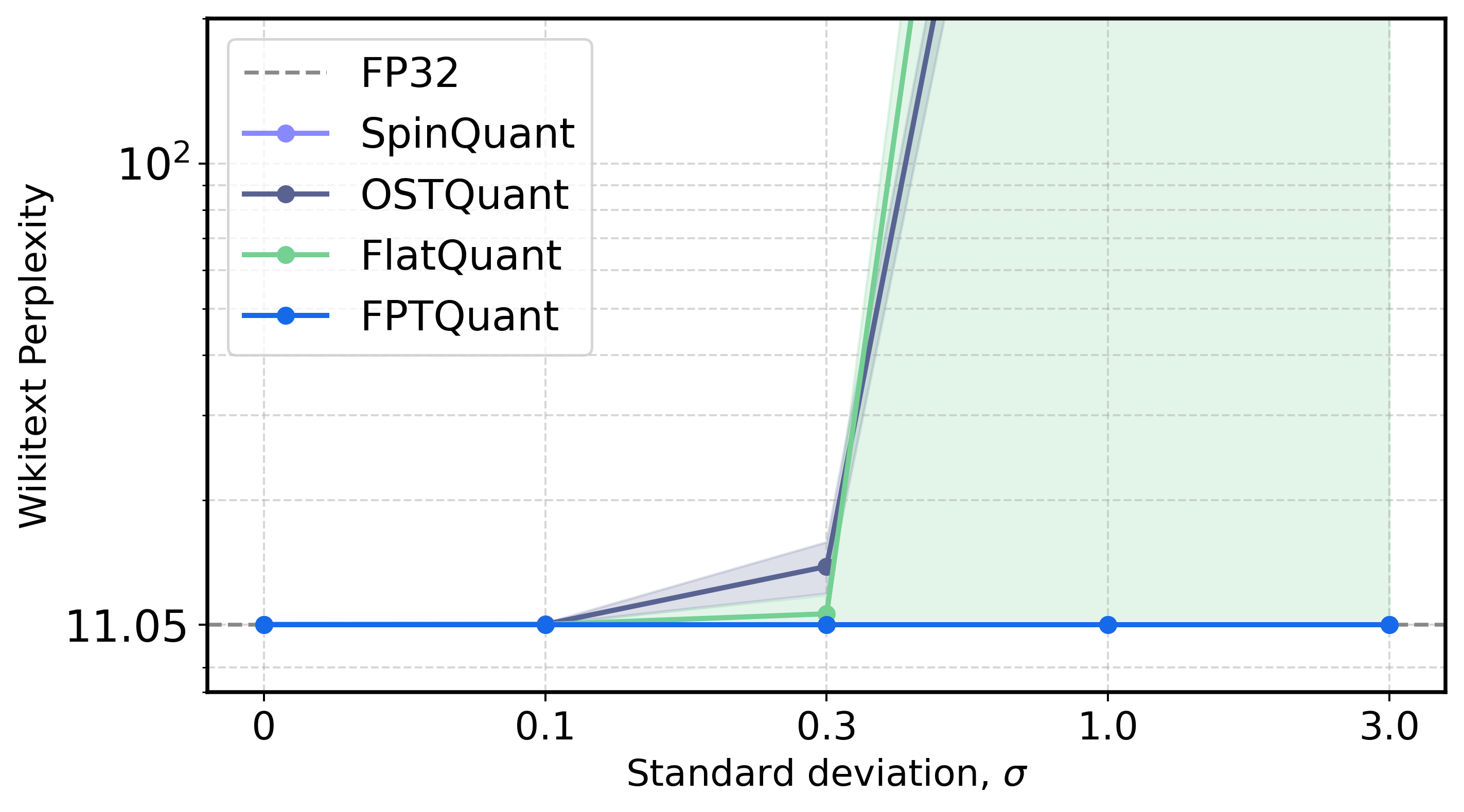}
    \vspace{-7mm}
    \caption{\small \textbf{FPTQuant is function-preserving, and is stable during optimization.} We add i.i.d. Gaussian noise $N(0,\sigma^2)$ to all transform parameters, keeping parametrization constraints (e.g. orthogonality) intact. SpinQuant and FPTQuant remain constant---even for larger noise, the output of the model remains the same (as desired by \ref{des:identity}). FlatQuant is less stable in practice, and OSTQuant is not function-preserving. This affects training stability}.
    \label{fig:04_sensitivity}
\end{figure}
\paragraph{Sensitivity analysis.} We conduct a sensitivity analysis to show the function-preserving properties of different transforms~\emph{without quantization} on Llama-3.2 3B instruct. We measure the Wikitext perplexity after adding i.i.d. Gaussian noise $N(0,\sigma^2)$ to all transform parameters.
This also gives insight into training stability---when the output of the model is stable w.r.t. the parametrization of transforms, noisier gradient updates are less likely to lead to unstable training.

\paragraph{Results.} Figure~\ref{fig:04_sensitivity} shows that SpinQuant and FPTQuant are completely stable w.r.t. noise. FlatQuant is theoretically function-preserving, but suffers from floating point issues from roughly $\sigma\geq 0.3$ (see also their Appendix . OSTQuant is not function-preserving, and shows serious degradation for even $\sigma=0.3$.
\textit{We provide more details and results for other models in Appendix~\ref{app:sensitivity}.}

\subsection{Dynamic quantization}
\label{sec:exp_dynamic}
\paragraph{Setup.} 
We repeat the previous experiment with W4A4KV4 in a dynamic quantization setting.
This is identical to the FlatQuant setup, from which we report baseline results. 
For all methods, we use round-to-nearest for weight quantization, with learnable weight and activation clipping---see Appendix \ref{app:gptq} for GPTQ results.


\vspace{-1mm}\paragraph{Results.}
We observe (Table \ref{tab:dynamic}) that \NAME is consistently on par or better than all baselines except FlatQuant. However, \NAME is up to 29\% faster than FlatQuant.
\begin{table}[bt]
\centering
\caption{\small \textbf{Dynamic quantization.} We run the dynamic quantization experiment (W4A4KV4) from FlatQuant (Table 1 and Table 2, \citep{sun_flatquant_2024}), reporting their results for baselines (marked~\ts{*}). 
\NAME is on par or better than most of the baselines, except FlatQuant, yet FlatQuant is up to 29\% slower.}
\vspace{-2mm}
\label{tab:dynamic}
\setlength{\tabcolsep}{1mm}
{\resizebox{\columnwidth}{!}{
\renewcommand{\arraystretch}{1.07}
\begin{tabular}{l||cc|cc|cc} 
\toprule
& \multicolumn{2}{c|}{\textbf{Llama-2 7B}} & \multicolumn{2}{c|}{\textbf{Llama-2 13B}} & \multicolumn{2}{c}{\textbf{Llama-3 8B}} \\
\textbf{Method}  & Wiki & 0-shot$^6$ & Wiki & 0-shot$^6$ & Wiki & 0-shot$^6$  \\
& ($\downarrow$) & Avg.($\uparrow$) & ($\downarrow$) & Avg.($\uparrow$) & ($\downarrow$) & Avg.($\uparrow$) \\ \hline
FP16  	&	5.47	&	69.79	&	4.88	&	72.55	&	6.14	&	73.33	\\
\hline
SmoothQuant\ts{*}  	&	83.1	&	 - 	&	35.9	&	 -	&	210	&	 - 	\\
QuaRot\ts{*} 	 	&	8.56	&	57.73	&	6.10	&	66.25	&	10.60	&	61.34	\\
SpinQuant\ts{*} 	 	&	6.14	&	63.52	&	5.44	&	68.56	&	7.96	&	66.98	\\
OSTQuant	&		6.38	&	65.88	&	5.34	&	69.87	&	7.98	&	68.32	\\
FlatQuant\ts{*} 	&	5.79	&	67.96	&	5.12	&	71.42	&	6.98	&	71.23	\\
\rowcolor{gray!20} \textbf{\NAME} 	&	5.97	&	66.06	&	5.37	&	69.81	&	7.67	&	68.41	\\
\bottomrule
\end{tabular}
\renewcommand{\arraystretch}{1}
}}  
\vspace{-4mm}
\end{table}

\section{Discussion}\label{sec:discussion}

\paragraph{\NAME.} When choosing FPTs, there is a trade-off between expressivity (\ref{des:expressivity}) and cost (\ref{des:cost}): more expressive transforms can help reduce quantization error, but incur overhead. By understanding commutation properties of existing operations within the transformer, we have designed most of \NAME's transforms to be both expressive, yet mergeable into existing weights. In many settings, the FPTs used by \NAME provide a good trade-off between accuracy and speed. For some settings, one may prefer to combine \NAME with more expressive, non-mergeable transforms. For example, in Table \ref{tab:fptquant_ph} we show that adding the non-mergeable $P_h$ transform (FlatQuant) to \NAME yields another good accuracy-speed trade-off, with better KV cache compression but incurring some overhead. Choosing which FPTs to choose is largely dependent on the model, quantization setting, and resource constraints. In Appendix~\ref{app:guide_fpts} we provide some guidelines for practitioners who want to use FPTs for quantizing their own models.

\paragraph{Limitations and future work.}
We evaluated \NAME on LLMs from different generations and with different sizes. While challenges and outlier patterns are often similar across different models~\citep{bondarenko_quantizable_2023,kovaleva_bert_2021,dettmers_gpt3_int8_2022}, it cannot be guaranteed that our insights and gains equally translate to all LLMs. Mixture-of-expert models (MoEs) could also be explored. \NAME is applicable to MoEs with only minor modifications---the rotation $\tT_r$ needs to be merged into the router input weights and all expert output weights, and $\tS_n,\tT_r,\tT_u$ can be applied to each expert separately.
Future work could alo extend evaluations to FP microscaling formats---there is some evidence that FPTs can help MXFP4~\citep{egiazarian2026bridging}, but research is still limited. 

Another limitation is integer quantization itself. Current Nvidia hardware has limited support for low-bit integer, which hinders accurate benchmarking. In the short term, this also means any integer quantization work is primarily relevant for edge compute, where most computation happens in integer due to energy and chip area constraints. Due to integer's power advantage over FP and ongoing work to reduce quantization error at extreme low bits, we can expect more integer support on future (inference) servers.

\FloatBarrier

\section*{Impact Statement}
We think \NAME has significant positive societal impact. \NAME empowers the use of smaller bit widths, which reduces computational, energy, and environmental impact of LLMs. Reduced LLM cost and memory footprint could make LLMs more accessible to economically-disadvantaged populations, and could improve inference on edge devices (e.g. smartphones).

\bibliographystyle{unsrtnat}
\bibliography{references}


\newpage
\appendix
\onecolumn
\newpage
\appendix


\section{Detailed transforms comparison}
\label{app:transform_comparison}
In Table \ref{tab:comparing_transforms} we include the representation and theoretical cost of existing transforms. In Table \ref{tab:related_work} we review existing works, the transforms they use, and their placements.

%
\begin{table*}[ht]
\begin{floatrow}
    \capbtabbox{%
        \vspace{-26.5em}
        \resizebox{0.39\textwidth}{!}{%
            \begin{tabular}{ll}
                \toprule
                Alias & \multicolumn{1}{c}{Location} \\
                \midrule  
                \q{ao} & Attention output \\
                \q{ap} & Attention probabilities \\
                \q{aw} & Attention weights \\
                \q{d} & Down projection output \\
                \q{g} & Gate projection output \\
                \q{gs} & SiLU output \\
                \q{k} & Key projection output \\
                \q{ke} & Key RoPE-embedded \\
                \q{mm} & Gate $\odot$ up multiplication \\
                \q{na} & Norm self-attention \\
                \q{nm} & Norm MLP/FFN \\
                \q{o} & Output projection output \\
                \q{q} & Query projection output \\
                \q{qe} & Query RoPE-embedded \\
                \q{ra} & Residual addition self-attention \\
                \q{rm} & Residual addition MLP/FFN \\
                \q{u} & Up projection output \\
                \q{v} & Value projection output \\
                \bottomrule
            \end{tabular}
        }  
    } 
    %
    %
    \quad
    \includegraphics[width=0.55\textwidth]{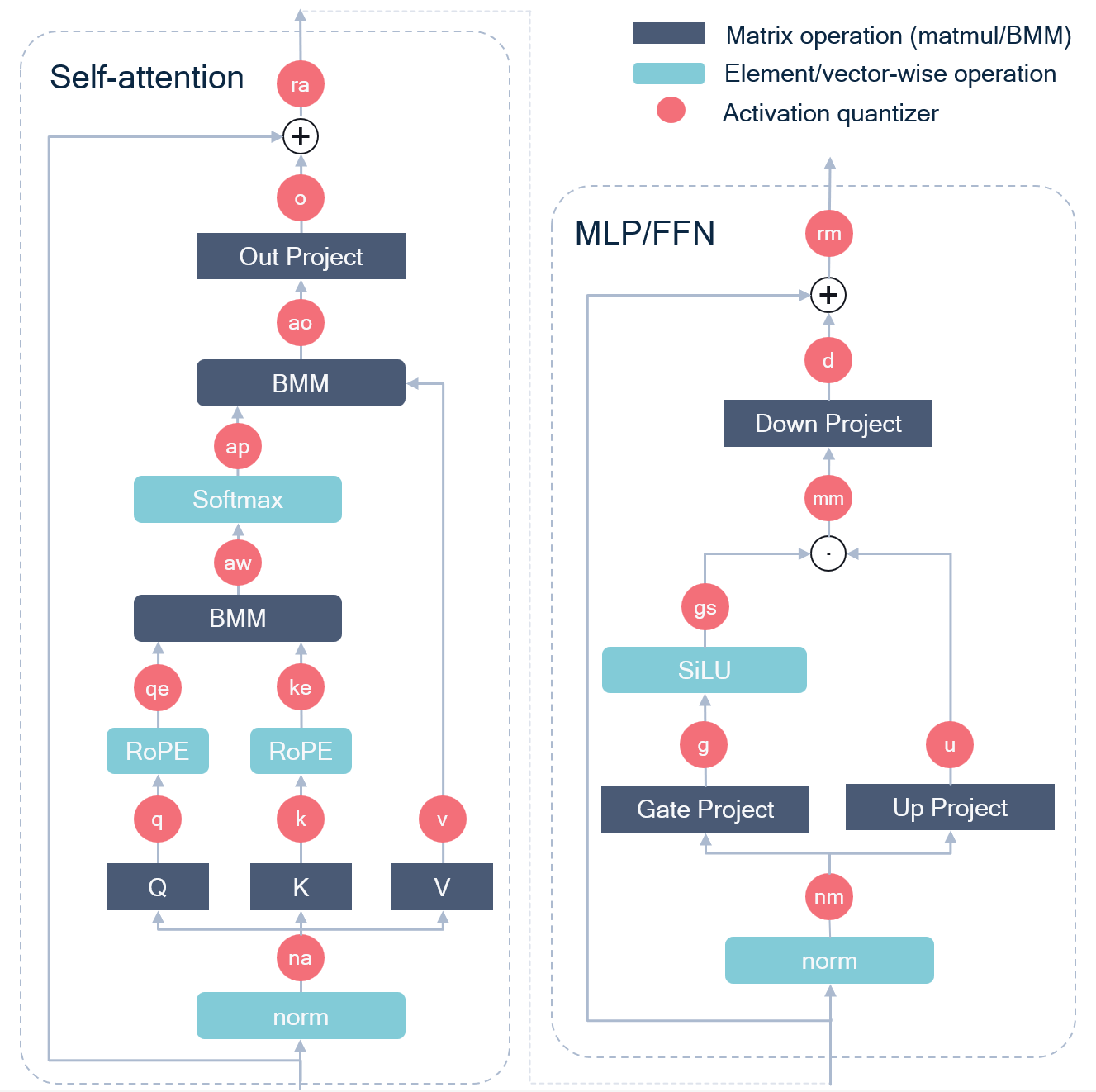}
    %
    \captionlistentry[figure]{}
    \label{fig:A_act_quantizers}
    \captionsetup{labelformat=andfigure,position=top}
    \caption{\textbf{Activation quantizers}: aliases and locations.}
    \label{tbl:A_act_quantizers}
\end{floatrow}
\end{table*}

\begin{table}[hbt]
    \centering
    \caption{\textbf{Comparing different transforms.} Cost is measured in terms of a single matrix vector multiplication, $xM$, where $M\in\mathbb{R}^{n\times n}$ and row vector $x\in\mathbb{R}^n$. Memory is total parameters.}
    \label{tab:comparing_transforms}
    \small
    \begin{tabularx}{\textwidth}{l|lcL} \toprule
         Transform & Cost & Memory & Matrix representation \\ \midrule
         Scaler & $\bigO(n)$ & $n$ & $A=\diag (\mathbf{s})$, with $\mathbf{s}\in \mathbb{R}^n$, $s_i\neq 0$\\
         Full matrix & $\bigO(n^2)$ & $n^2$ & Any invertible matrix $A\in\mathbb{R}^{n\times n}$\\
         Orthogonal & $\bigO(n^2$) & $n^2$ & $A\in\mathbb{R}^{n\times n}$ s.t. $AA^T=I$\\
         Rotation & $\bigO(n^2)$ & $n^2$  & $A\in\mathbb{R}^{n\times n}$ s.t. $AA^T=I$ and $\det(A)=1$\\
         Block diagonal ($K$ blocks) & $\bigO(\frac{n^2}{K})$& $\frac{n^2}{K}$ & $A=\diag(B_1,...,B_K)$, with invertible $B_k\in \mathbb{R}^{\frac{n}{K}\times \frac{n}{K}}$, $k=1,...,K$\\
         Kronecker & $\bigO(n\sqrt n)$ & $\sim2n$ & $P=P_1 \otimes P_2$, with invertible $P_i\in \mathbb{R}^{n_i\times n_i}$ and $n_1n_2=n$ (usually $n_1\approx n_2\approx \sqrt{n}$)\\
         Hadamard Transform (HT) & $\bigO(n\log n)$ & 0 & $H_n=\frac{1}{\sqrt n}\bigotimes_{i=1}^{\log_2 n}$ \tiny{$\begin{bmatrix} +1 & +1 \\ +1 & -1 \end{bmatrix}$}\\
         Randomized HT (RHT) & $\bigO(n\log n)$ & $n$ & $\diag(\mathbf{s}) H_n $, with Bernoulli $\mathbf{s}\in\{-1,+1\}^n$\\
         Block HT ($K$ blocks) & $\bigO(n\log [n/K])$ & 0 & $A=\diag(\{H_{n/K}\}^K)$\\
         \bottomrule         
    \end{tabularx}
\end{table}

\begin{table}[hbt]
    \centering
    \caption{\textbf{Function-preserving transform in LLM quantization literature.} (R)HT: (randomized) hadamard transform. CW: Channel-wise. E2E: end-to-end training, with either original [Label] or student-teacher [ST] loss. For transform locations, see Table~\ref{tbl:A_act_quantizers}. 
    }
    \small
    \label{tab:related_work}
    \begin{tabularx}{\textwidth}{L|LLlL} \toprule
         Work & Transform style & Transform location & Mergeable & Optimization \\ \midrule
         SmoothQuant & CW Scaler & \q{na}, \q{nm} & True &  Local $L_\infty$\\
         \citep{xiao_smoothquant_2024} &&&& \\ \hline
         Outlier supp+ & CW Affine & \q{na}, \q{nm} & True & Grid search \\
         \citep{wei_outlier_2023} &&&& \\
         \hline
         OmniQuant & CW Affine & \q{na}, \q{nm}, \q{v} & True & Block-wise\\
         \citep{shao_omniquant_2024}
         & CW Scaler & (\q{qe}, \q{ke}) & False$\dagger$ & Block-wise \\
         \hline
         QuaRot & HT & \q{mm}$^\ddagger$, \q{ao}, (\q{qe}, \q{ke}) & False & - \\
         \multirow{2}{20mm}{\citep{ashkboos_quarot_2024}} & HT & \q{v} & True & - \\
         & RHT & \q{ra}, \q{rm} & True & - \\
         \hline
         SpinQuant & RHT & (\q{qe}, \q{ke}) $R_3$, (\q{mm}) & False & - \\
         \citep{liu_spinquant_2024} & Rotation & merged into all weights ($R_1$)$^\ddagger$, (\q{v},\q{out}) ($R_2$) & True & E2E[Label] \\
         \hline
         DuQuant \quad \citep{lin_duquant_2024} & Scaler+Permute+block-wise rotate & linear weights/inputs & False & Iterative greedy \\
         \hline
         OSTQuant$^*$  & Scaler+orthogonal & \q{na},\q{nm},(\q{v},\q{out}) & True & E2E[ST] \\
         \citep{hu_ostquant_2025} & HT & (\q{qe}, \q{ke}), \q{mm} & False & - \\
         \hline
         FlatQuant  & Kronecker & \q{na} ($P_v$), \q{ao} ($P_o$), \q{nm} ($P_{ug}$), \q{mm} ($P_d$) & False & E2E[Label] \\
         \citep{sun_flatquant_2024} & Full & (\q{qe}, \q{ke}) ($P_h$) & False & E2E[Label] \\
         & Full & (\q{v},\q{out}) ($P_v$) & True & E2E[Label] \\
         
         \midrule
         \NAME (us)$^\ddagger$ & PreRoPE & (\q{q}, \q{k}) & True & Local $L_p$+E2E[ST] \\
          & Full per head & (\q{v}, \q{out}) & True & Local $L_p$+E2E[ST] \\
          & CW Scaler & (\q{up}, \q{down}) & True & Local $L_p$+E2E[ST] \\
          & Sequence Scaler & (\q{ra}, \q{rm}, \q{ap}, \q{mm}) & False & - \\
         \bottomrule         
    \end{tabularx}
{\scriptsize
$^\dagger$ Authors claim channel-wise scaling of queries and keys can be merged, 
which does not hold for non-additive positional encodings (e.g. RoPE). $^\ddagger$We also use SpinQuant's mergeable $R_1$ rotation, and non-mergeable HT at mm.
$^*$OSTQuant also proposed a scaler before RoPE, but this does not commute with RoPE and is thus not function-preserving.}
\end{table}


\section{Discussion on static vs dynamic quantization}
\label{app:static_vs_dynamic}

Traditionally quantization literature and fixed-point accelerators always used static activation scaling factors. 
However, most LLM quantization literature almost silently started assuming dynamic  scaling factors for activations.
We call out the distinction here, since in practice it has a big impact on both inference token rate and even which platforms are supported.


\vspace{-2mm}\paragraph{Static quantization} fixes the maximum anticipated range of each quantized tensor ahead of inference time.  
The quantization grid is determined based on a small calibration dataset, before fixing the scale factors.
At runtime, we need only apply these floating-point scale factors following integer matrix multiplication~\citep{nagel_white_2021}.
However, despite the ubiquity of static quantization, all prior work to date using FPTs for LLM quantization that we have surveyed (Section \ref{sec:related_work_transforms}) assumes dynamic quantization.

\vspace{-2mm}\paragraph{Dynamic quantization (DQ)} foregoes the calibration step and instead computes scale factors dynamically at runtime for each token independently.
This means we can set a large grid for tokens with outliers, while keeping the grid small for tokens without outliers.
While this obviously is a huge boon for model performance, it unfortunately introduces a non-trivial compute cost.

\vspace{-2mm}\paragraph{DQ compute overhead.} 
At inference time, DQ requires the minimum and maximum activation values to be computed and reduced over the last dimension of the whole activation tensor, for each token.
The resulting scale factors must then be broadcast and applied to each value.
This reduce-broadcast operation can be relatively fast on a CPU, which operates on small chunks of data at a time, such that the binary tree required for the reduction is manageable.
However, GPUs and NPUs typically process large tensors at once using custom hardware, and thus the reduction and broadcast tree operations are deep and slow relative to the high throughput MAC operations themselves.

\vspace{-2mm}\paragraph{Lack of support for DQ.} DQ is currently not natively supported on many popular hardware and software stacks. 
For example, popular quantization packages such as 
Nvidia
TensorRT~\citep{nvidia_tensorrt} and 
PyTorch AO~\citep{torchao} do not support DQ.
Edge hardware platforms also lack support for DQ on their accelerators, including Qualcomm SnapDragon~\cite{qnn} and Nvidia Deep Learning Accelerator (DLA)~\cite{dla_no_dq}.


\section{Proof Theorem 1} \label{app:proof_prerope}

\paragraph{RoPE background.}
RoPE's \citep{su_roformer_2023} aim is to modify the queries and keys, such that the output of the query-key multiplication is dependent on their relative positions. RoPE achieves this by multiplying queries and keys with a time-dependent rotation matrix, i.e. RoPE is a function $f:\RR^d\times \NN\to\RR^d$ with $f(\bx,i)=\bx \tR^{d_{head}}_{\Theta,i}$, where $i$ denotes the token index, $\Theta$ the RoPE parameters, and $d_{head}$ the head dimension. Matrix $\tR^{d_{head}}_{\Theta,i}$ is a block-diagonal matrix with $N=d_{head}/2$ blocks. Each block $n$ has size $2\times 2$ and denotes a rotation of angle $i\theta_n$ of two dimensions. Denoting a 2-dimensional rotation of angle $\theta$ by $\tR^{(2)}_{\theta}$, we can thus write $\tR^{d_{head}}_{\Theta,i}=\diag((\tR_{i\theta_n})_{n=1}^N)$. As desired, the product between embedded keys and queries depends only on their relative, not absolute, position: $\langle f(\bq_i,i),f(\bk_j,j)\rangle=\bq_i \tR^d_{\Theta,i-j}\bk_j\T$. We develop transforms that we can apply to queries and keys, yet do not alter the output of the attention softmax. We design these to commute with RoPE's $\tR^d_{\Theta,i}$ for all $i$, so that they can be applied before RoPE and merged into $\tW_q$ and $\tW_k$.

\textbf{Theorem \ref{theorem:prerope}}
Let $N=d_{head}/2$, and $\tR_n\in O(2)$ and $s_n\in \RR$, for $n=1,...,N$. Define $\tT_k = \diag(\bs)\diag(\{\tR_n\}_{n=1}^N)$ and $\bar{T}_k=\diag(\bs^{-1})\diag(\{\tR_n\}_{n=1}^N)$. Given query and key weights $(\tW_q, \tW_k)\in\RR^{d_{in}\times d_{head}}$, define $\tilde{\tW}_q=\tW_q \bar{\tT}_k$ and $\tilde{\tW}_k=\tW_k \tT_k$. Now it holds:
$$
\langle f(\bx_i \tilde{\tW}_q,i),f(\bx_j \tilde{\tW}_k,j)\rangle=\langle f(\bx_i \tW_q,i),f(\bx_j \tW_k,j)\rangle
$$

\begin{proof}
First, let us prove that $\tT_k$ commutes with $\tR^{d_{head}}_{\Theta,i}$ for any $i$ and $\Theta$. Both are block diagonal (with blocks of size 2×2), so we can treat each block individually. For the individual blocks of $\tR_{\Theta,i}^d$ and $\tT_k$, write $\tR_{i\theta_n}$ and $w_n\tR_{\phi_n}$. Trivially, scalars commute with matrices, i.e. $w\tA=\tA w$ for any matrix $\tA$ and $w\in\RR$. Additionally, $2\times2$ rotations commute, hence $\tR_{i\theta_n}w_nR_{\phi_n}=w_nR_{\phi_n}R_{i\theta_n}$. As this holds for all blocks, $\tR_{\Theta,i}^{d_{head}}\tT_k=\tT_k\tR_{\Theta,i}^{d_{head}}$.

Second, note that $\bar{\tT}_k \tT_k\T=I$,\footnote{For single-headed attention, $\bar{\tT}_k=\tT_k^{-1}$, but this is not true for grouped query attention (Eq. \ref{eq:prerope_grouped_query}which is typically used in LLMs.} since weights and rotations cancel out. Replacing $\tW_q, \tW_k$ by respectively $\tilde{\tW}_q$ and $\tilde{\tW}_k$ thus gives attention values:
\begin{align*}
    \langle f(\bx_i \tilde{\tW}_q,i),f(\bx_j \tilde{\tW}_k,j)\rangle &=\langle \bx_i \tW_q \bar{\tT}_k\tR_{\Theta,m}^d,\bx_j \tW_k \tT_k\tR_{\Theta,n}^d\rangle \\
    &=\langle \bx_i \tW_q \tR_{\Theta,i}^d \bar{\tT}_k,\bx_j \tW_k \tR_{\Theta,j}^d \tT_k\rangle \\
    &=\langle \bx_i \tW_q \tR_{\Theta,i}^d \bar{\tT}_k \tT_k\T,\bx_j \tW_k \tR_{\Theta,j}^d\rangle \\
    &=\langle f(\bx_i \tW_q,i),f(\bx_j \tW_k,j)\rangle,
\end{align*}
as desired.
\end{proof}

\begin{remark}
    Note: $\tR_{\Theta,i}^d$ overall is a rotation matrix, however rotation matrices generally do not commute unless they share the same axes of rotations. This motivates a transform that uses the same block structure. Note also that a block-wise orthogonal matrix would not suffice, since orthogonal matrices that are not rotations (i.e. that contain also a reflection) do not commute with rotations.
\end{remark}


\section{Experimental details} \label{app:exp_details}
\paragraph{General set-up.}
For all experiments and methods, we use batch size 16 with 2048 sequence length for training. We train on Wikitext-2, for 1024 steps with cosine learning rate scheduler, 10\% warm-up, and learning rate based on validation PPL. For Qwen-2.5 32B, we use sequence length 512 and 512 steps. For W4A4KV4 Qwen-2.5 32B, we observed larger sensitivity to training, and run each method with multiple learning rates and number of steps (128,512,1024)---the best validation PPL was gotten for 128 steps for SpinQuant, 512 for RTN-OPT, QuaRot and OSTQuant, and 1024 for FPTQuant and FlatQuant. In all experiments, we use learnable weight and activation clipping, i.e. the quantization scale and offset are parameters that are updated. We have found significant advantage to optimizing the quantization grid end-to-end, as can be seen by the relatively good RTN-opt baseline in Table \ref{tab:main_static}. For Wikitext perplexity, we evaluate on sequence length 4096---except for Llama-2 7B/13B, for which 2048 is the maximum.
We use 2048 sequence length in Table~\ref{tab:dynamic}, to allow fair comparison with results from FlatQuant.

\paragraph{Baselines.} This work focusses on FPTs. To really understand the value of the FPTs, and not just better and more costly training, we have chosen to use the same training set-up for all methods. This includes optimizing the quantization grid end-to-end. We have found that this significantly improved the performance of some of the baselines---in particular QuaRot and RTN, which do not use any optimization themselves. The only exception is the dynamic quantization experiment (Table \ref{tab:dynamic}): to make direct comparison possible, we use the set-up and numbers from FlatQuant, which does not optimize baselines.

\textbf{FPT Parametrization.} We use \texttt{torch.nn.utils.parametrizations.orthogonal} to parametrize orthogonal matrices with \textit{Cayley} parametrization. Some FPTs use matrix inversions, e.g. our $\tT_v$ and FlatQuant's $P_h$. To avoid computing the inverse during training, it is possible to parametrize these matrices using a singular value decomposition instead, consisting of a diagonal matrix and two orthogonal matrices. In practice, we have found that the added computation of the orthogonality parametrization (which internally computes inverses in any case), leads to slower training and worse results. To avoid potential instability problems with a direct inverse, we choose to keep all transforms in double precision.

\textbf{Quantizer range setting.} Although we learn the quantization grids, a good initial quantization grid improves training. We initialize the quantization grid during a range setting stage. For all experiments and method, we pass 64 sequences through the unquantized network and choose a grid that minimizes the $L_p$ norm of the difference between the unquantized and would-be-quantized values. Note that $L_\infty$ corresponds to minmax range setting, which is popular due to its simplicity. In practice, however, we have found that $p=3$ is better than either minmax, $L_4$ or $L_2$. We choose $L_3$ range setting for all experiments and also for baselines.

\textbf{Dynamic quantization experiment.} For the dynamic quantization experiment, we use the baseline numbers as reported in FlatQuant, except for OSTQuant which is not compared against in FlatQuant. Since OSTQuant only reports results with GPTQ in their own paper, we used their codebase and provided commands~\footnote{\url{https://github.com/BrotherHappy/OSTQuant}} to generate results for both RTN and GPTQ. For OSTQuant and FPTQuant, we repeat each experiment for three seeds and report the median perplexity and zero-shot accuracy.

\paragraph{Hadamard non-powers of 2}
\NAME, SpinQuant\citep{liu_spinquant_2024}, and QuaRot \citep{ashkboos_quarot_2024} use Hadamard transforms. Hadamard transforms are simple to define for powers of 2, namely $H_{2^d} = \bigotimes_{i=1}^d \tiny{\begin{bmatrix} +1 & +1 \\ +1 & -1 \end{bmatrix}}$. For some non-powers of 2, there are Hadamard transforms, but these are not implemented in popular packages like \texttt{fast-hadamard-transform} \citep{fast_hadamard_transform}. The simplest approach, and default behaviour in \texttt{fast-hadamard-transform}, is to pad with zeros and discard added dimensions after applying the Hadamard transform. This is not correct for FPTs: the added rows are necessary for mapping the transformed activations back to the original values, so setting these rows to zero instead, will yield a different output.

To avoid problems with non-powers of 2, we take a block-wise Hadamard transform: we split dimensions into $K$ groups that are each a power of 2, and apply a standard Hadamard to each. The residual and FFN hidden dimensions $d$ in LLMs are typically $2^{n}\times K$ with $K$ small---the largest we have found is $K=43$ for the FFN hidden dimension in Llama-2 7B.  The grouped Hadamard can be parallelized efficiently by reshaping the channel dimension into two dimensions of sizes $(K,2^n)$, applying the Hadamard, and reshaping back. Using a grouped Hadamard reduces the mixing to within groups, but we have found no evidence of a reduced ability to spread outliers due to this---probably because group sizes are still always $256$ (for Llama-2's FFN hidden layer) or larger.


\section{Quantization error per quantizer} \label{app:quant_error}

In this section, we study the quantization sensitivity of individual weight and activation quantizers.
%

\paragraph{Setup}
We apply INT4 RTN quantization, without optimization, to a single quantizer location (see Table~\ref{tbl:A_act_quantizers} for notation) at a time, and report the WikiText-2 test perplexity.
We follow the same protocol for the range setting as in the main setup (Appendix~\ref{app:exp_details}).
For activation quantization study, we repeat the experiment three times with different seeds (that affect the random selection of sequences for range estimation) and report the mean value.
\begin{table*}[htb]
    \setlength{\tabcolsep}{3pt}
    \centering
    \caption{\textbf{Ablation on weight quantizers}. We report WikiText-2 perplexity (lower is better).}
    \label{tbl:E_ablation_weight_quantizers}
    \renewcommand{\arraystretch}{1}
    %
    \begin{tabular}{l|cccc}
        \toprule
        \multicolumn{1}{c}{\textbf{Weight quantizers}} & \textbf{Llama-2 7B} & \textbf{Llama-3.2 3B it} & \textbf{Llama-3 8B} & \textbf{Qwen2.5-7B it} \\
        \midrule
        $\varnothing$ (FP16) & 5.47 & 10.48 & 5.75 & 6.85 \\
        \midrule
        \q{q\_proj} & 5.567 & 10.434 & 5.795 & 6.914 \\
        \q{k\_proj} & 5.546 & 10.167 & 5.806 & 6.976 \\
        \q{v\_proj} & 5.545 & 10.485 & 5.865 & 6.984 \\
        \q{o\_proj} & 5.504 & 10.628 & 5.859 & 6.952 \\
        \q{up\_proj} & 5.520 & 10.691 & 5.925 & 7.047 \\
        \q{down\_proj} & 5.626 & 11.118 & 6.176 & 7.119 \\
        \q{gate\_proj} & 5.513 & 10.795 & 5.885 & 7.034 \\
        \midrule
        all & 6.176 & 11.942 & 6.987 & 7.981 \\
        \bottomrule
    \end{tabular}
    \vspace{-.1cm}
\end{table*}
\begin{table*}[htb]
    \setlength{\tabcolsep}{3pt}
    \centering
    \caption{\textbf{Ablation on activation quantizers}. We report WikiText-2 perplexity (lower is better). See Figure \ref{fig:A_act_quantizers} for placement of each quantizer.}
    \label{tbl:E_ablation_act_quantizers}
    \renewcommand{\arraystretch}{1}
    %
    \begin{tabular}{c|cccc}
        \toprule
        \multicolumn{1}{c}{\textbf{Activation quantizers}} & \textbf{Llama-2 7B} & \textbf{Llama-3.2 3B it} & \textbf{Llama-3 8B} & \textbf{Qwen-2.5 7B it} \\
        \midrule
        $\varnothing$ (FP16) & 5.47 & 10.48 & 5.75 & 6.85 \\
        \midrule
        \q{ao} & 37.9 & 17.1 & 19.6 & 8.10 \\
        \q{ap} & 1.5e3 & 55.9 & 35.3 & 9.4e3 \\
        \q{aw} & 6.05 & 12.3 & 6.67 & 4.7e4 \\
        \q{d} & 8.5e7 & 9.0e3 & 2.3e5 & 1.4e5 \\
        \q{g} & 36.6 & 25.5 & 29.5 & 9.35 \\
        \q{gs} & 41.0 & 76.9 & 88.4 & 25.4 \\
        \q{k} & 5.95 & 12.6 & 6.61 & 3.9e4 \\
        \q{ke} & 6.02 & 13.6 & 6.90 & 3.3e4 \\
        \q{mm} & 1.1e4 & 1.7e4 & 3.1e5 & 4.5e4 \\
        \q{na} & 498 & 101 & 26.3 & 310 \\
        \q{nm} & 235 & 156 & 122 & 8.2e3 \\
        \q{o} & 294 & 997 & 1.5e3 & 762 \\
        \q{q} & 5.71 & 12.3 & 6.83 & 10.9 \\
        \q{qe} & 5.76 & 12.2 & 6.82 & 12.2 \\
        \q{ra} & 3.4e4 & 1.3e5 & 1.3e5 & 3.6e4 \\
        \q{rm} & 3.0e4 & 1.4e5 & 1.3e5 & 8.8e3 \\
        \q{u} & 34.6 & 31.5 & 43.8 & 13.7 \\
        \q{v} & 6.70 & 12.0 & 6.65 & 7.13 \\
        \midrule
        all & 3.2e4 & 1.3e5 & 1.3e5 & 1.6e5 \\
        \bottomrule
    \end{tabular}
    \vspace{-.1cm}
\end{table*}

\paragraph{Observations}
%
In Table~\ref{tbl:E_ablation_weight_quantizers}, we can see that each weight quantizer location adds about 0.1 perplexity, on average, while the down projection stands out from the rest a bit more.
We can also see that the perplexity drop from quantizing all weights is roughly the sum of drops of individual weight quantizers, meaning that the weight quantization noise is approximately additive. 

From Table~\ref{tbl:E_ablation_act_quantizers}, however, we observe that activation quantization is significantly more challenging, where often a single activation quantizer completely ruins the model performance.
Specifically, among the most problematic locations consistently for all models are down projection input/output (\q{mm}, \q{d}), and residuals (\q{ra}, \q{rm}).
Typically, those locations have the strongest outliers, which makes them difficult to quantize with uniform affine quantization scheme.


\section{Additional ablation studies} \label{app:ablation}
We introduce FPT $\tT_v$ which is an in-place replacement for $R_2$ (SpinQuant) and $P_v$ (FlatQuant). We also propose $\tT_k$, which has a similar aim as $R_3$ (SpinQuant) and $P_h$ (FlatQuant), but is mergeable. At last, we introduce $\tT_u$, which is an \textit{addition} to QuaRot/SpinQuant's Hadamard transform before the down projection. In this Appendix we ablate these FPTs.

\subsection{Transform ablations} \label{app:ablation_transforms}
\paragraph{$\tT_v$.} We introduce FPT $\tT_v$, which is both mergeable, but also very expressive---we can choose and optimize \textit{any} invertible $d_{head}\times d_{head}$ matrix for each attention head, giving in total $H\times d_{head}\times d_{head}$ degrees of freedom. This is much stronger than SpinQuant's $R_2$ \citep{liu_spinquant_2024}, which optimizes a single orthogonal matrix across all value heads (about $d_{head}^2/2$ degrees of freedom). It is also stronger than FlatQuant's $P_v$, who propose two options for parametrizing $P_v$, either a Kronecker or full matrix (see Table \ref{tab:comparing_transforms}), but in both cases not chosen per head (max $d_{head}\times d_{head}$ degrees of freedom).

We ablate the value of $\tT_v$ compared to $P_v$ (full matrix) and $R_2$. To isolate the effect of these FPTs, we quantize only weights, V-cache, and input to the out projection layer (W4A4). We use the same training set-up as in the main experiments.

We observe (Table \ref{tab:ablation_v}) that $\tT_v$ performs consistently better across models, in particular significantly outperforming SpinQuant's $R_2$. Since all these options have the same inference cost---0, since they are mergeable---we believe $\tT_v$ should be a preferred choice.

\begin{table}[hbt]
    \centering
    \caption{\textbf{$\tT_v$ is stronger than baseline FPTs.} We compare against $R_2$ and $P_v$ from resp. SpinQuant and FlatQuant, which are also transforms applied to values and mergeable into $\tW_v$ and $\tW_o$. We use W4A4KV4 with only weights, V-cache, and out projection input quantized, and report Wikitext perplexity (lower is better).}
    \label{tab:ablation_v}
\begin{tabular}{l||c|c|c}
\toprule
\textbf{FPT} & \textbf{L3.2 3B-it} & \textbf{L3 8B} &\textbf{L2 7B}  \\
\midrule
  $-$\hspace{2mm} (RTN-opt) & 11.04 & 7.15  & 5.90  \\
 $R_2$ (SpinQuant) & 11.49 & 7.05 & 6.06 \\
$P_v$ (FlatQuant) & 10.86 & 6.67 & 5.74 \\
$\tT_v$ (\NAME) & 10.82 & 6.63 & 5.73 \\
\bottomrule
\end{tabular}
\end{table}

\paragraph{$\tT_k$.}  We conduct a similar ablation for $\tT_k$. $\tT_k$ is merged into $\tW_k$ and $\tW_q$, and can thus help with key and query quantization. This is similar to $R_3$ and $P_h$ from respectively SpinQuant and FlatQuant, although these transforms are applied online after the RoPE operator, and thus incur overhead. However, these baselines FPTs are less restricted as a result, and can thus ensure more mixing across channels.

We run a similar experiment as before, only quantizing weights, queries, and keys. We find (Table \ref{tab:ablation_prerope}) that for 4-bit quantization of queries and keys, \NAME underperforms baselines due to the more restrictive FPT and less mixing across channels. At W4A8, we find $\tT_k$ performs on par with baseline FPTs. This experiment clearly shows the expressivity and cost trade-off, \ref{des:expressivity} vs \ref{des:cost}. In some cases, especially when aggressive query-key quantization is beneficial, the overhead of $R_3$ or $P_h$ may weigh up against their higher cost. In Table \ref{tab:fptquant_ph} we show that adding $P_h$ indeed narrows the gap to FlatQuant on the hardest (W4A4KV4) quantization settings.

\begin{table}[hbt]
    \centering
    \caption{\textbf{Ablating Pre-RoPE.} We quantize only weights and post-RoPE queries and keys and compare the performance of three comparable FPTs. We find that the Pre-RoPE transform $\tT_k$ underperforms baselines at 4 bit quantization of the queries and keys. This is unsurprising---$\tT_k$ is designed to be mergeable before RoPE, but this results in a more constraint, and less expressive FPT. We observe that at 8 bit queries and keys, $\tT_k$ performs on par with baselines.}
    \label{tab:ablation_prerope}
\begin{tabular}{ll||cc|cc|cc}
\toprule
 \textbf{Quant} & \textbf{FPT} & \multicolumn{2}{c|}{\bf Llama-3.2 3B it} & \multicolumn{2}{c|}{\bf Llama-3 8B} & \multicolumn{2}{c}{\bf Llama-2 7B} \\
 &  & \textbf{Wiki} & \textbf{0-shot$^6$} & \textbf{Wiki} & \textbf{0-shot$^6$}& \textbf{Wiki} & \textbf{0-shot$^6$} \\
\midrule
\multirow[t]{4}{*}{4} &   $-$\hspace{2mm}(RTN-opt) & 11.20 & 62.41 & 7.11 & 68.88 & 5.86 & 66.11 \\
 &  $R_3$ (SpinQuant) & 10.78 & 63.19 & 6.63 & 70.47 & 5.69 & 68.03 \\
 & $P_h$ (FlatQuant) & 10.82 & 63.53 & 6.62 & 70.75 & 5.68 & 67.83 \\
 & $\tT_k$ (\NAME) & 11.03 & 62.53 & 6.92 & 69.22 & 5.83 & 66.46 \\
\midrule
\multirow[t]{4}{*}{8} &   $-$\hspace{2mm} (RTN-opt) & 10.71 & 64.59 & 6.45 & 72.06 & 5.64 & 68.56 \\
 &  $R_3$ (SpinQuant) & 10.70 & 64.42 & 6.44 & 71.04 & 5.64 & 68.27 \\
 & $P_h$ (FlatQuant) & 10.71 & 64.88 & 6.44 & 72.00 & 5.65 & 68.26 \\
 & $\tT_k$ (\NAME) & 10.71 & 64.66 & 6.44 & 71.32 & 5.65 & 68.38 \\
\bottomrule
\end{tabular}
\end{table}

\begin{table}[h!]
\centering
\caption{\textbf{Including online $P_h$ \citep{sun_flatquant_2024} into FPTQuant narrows the gap to FlatQuant.} This incurs some additional overhead, but can be favourable for the hardest (W4A4KV4) quantization settings. We use the same setting as used in Table \ref{tab:main_static}}
\label{tab:fptquant_ph}
\begin{tabular}{c|l|cc|cc|cc}
\toprule
\textbf{\#Bits} & \textbf{Method} & \multicolumn{2}{c|}{\textbf{Llama-3.2 3B it}} & \multicolumn{2}{c|}{\textbf{Llama-3 8B}} & \multicolumn{2}{c|}{\textbf{Llama-2 7B}} \\
 $_{\text{W-A-KV}}$ & & Wiki & 0-shot & Wiki & 0-shot & Wiki & 0-shot \\
\midrule
16-16-16 & FP16        & 10.48 & 65.63 & 5.75  & 73.33 & 5.47  & 69.79 \\
\hline
4-8-4    & FlatQuant   & 10.88 & 63.69 & 6.51  & 70.83 & 5.91  & 66.04 \\
         & FPTQuant    & 11.12 & 62.42 & 6.78  & 69.46 & 6.05  & 62.68 \\
         & FPTQuant+$P_h$   & 10.81 & 62.91 & 6.63  & 70.12 & 5.98  & 63.04 \\
\hline
4-4-4    & FlatQuant   & 11.38 & 61.00 & 9.55  & 61.43 & 951   & 29.70 \\
         & FPTQuant    & 11.71 & 59.27 & 9.74  & 52.96 & 940   & 29.65 \\
         & FPTQuant+$P_h$   & 11.54 & 60.61 & 9.38  & 54.25 & 899   & 29.83 \\
\bottomrule
\end{tabular}
\end{table}

\paragraph{$\tT_u$.} 
The activations before the down projection layer have large outliers. A Hadamard transform at this location has been shown to massively reduce the quantization error \citep{liu_spinquant_2024,ashkboos_quarot_2024}, as it mixes outliers across channels and hence whitens the activation distribution. Whitening is more effective if variables (in this case, channels) have a similar scale. Our scaling transform $\tT_u$ achieves exactly this, whilst being completely mergeable.

In this ablation, we test the performance of a Hadamard transform $\tT_d$ with and without $\tT_u$. We use a randomized Hadamard transform for $\tT_d$, as \citet{liu_spinquant_2024} find that even 1 and -1 scales can perform better than non-randomized. Intuitively, $\tT_u$ has large benefits over using randomized Hadamard transforms: the randomized Hadamard discrete binary vector is not easy to optimize, does not allow proper scaling down of high-variance channels, and has been shown to exhibit large variance w.r.t. initialization \citep{liu_spinquant_2024}. We only quantize the down projection input and weights (W4A4), but leave all other activations unquantized. We train for 512 steps with batch size 8 and sequence length 2048 optimizing quantization grid and $\tT_u$ scalers, and run for three seeds. 

In Table \ref{tab:ablation_u} we observe that adding $\tT_u$ has a consistently significant positive effect on quantization error. Like \citet{liu_spinquant_2024}, we find that the randomized Hadamard transform has large variance, yet we never observe it does better than when $\tT_u$ is added. The largest benefit is observed for Llama-2 7B, which has significant outliers in activation before the down projection, which $\tT_u$ can scale down.

\begin{table}[hbt]
\caption{\textbf{Adding scaling transform $\tT_u$ before the Hadamard transform $\tT_d$ significantly reduces quantization error.} Results for W4A4 quantization, with only the input to the down projection quantized. QuaRot and SpinQuant use $\tT_d$ only.} \label{tab:ablation_u}
\centering
\setlength{\tabcolsep}{2mm}
{\resizebox{0.75\textwidth}{!}{
\begin{tabular}{l||cc|cc|cc}
\toprule
 \textbf{FPT} & \multicolumn{2}{c|}{\bf Llama-3.2 3B it} & \multicolumn{2}{c|}{\bf Llama-3 8B} & \multicolumn{2}{c}{\bf Llama-2 7B} \\
   & \textbf{Wiki} & \textbf{0-shot$^6$} & \textbf{Wiki} & \textbf{0-shot$^6$}& \textbf{Wiki} & \textbf{0-shot$^6$} \\
\midrule
$-$ & 121 $^{\pm 18}$ & 30.63 $^{\pm 0.35}$ & 4958 $^{\pm 2399}$ & 29.88 $^{\pm 0.21}$ & 787 $^{\pm 160}$ & 29.9 $^{\pm 0.19}$ \\
$ \tT_d$ & 12.16 $^{\pm 0.64}$ & 56.62 $^{\pm 2.15}$ & 10.75 $^{\pm 0.62}$ & 60.6 $^{\pm 0.82}$ & 83.8 $^{\pm 55.5}$ & 31.15 $^{\pm 0.84}$ \\
$\tT_u, \tT_d$ & 10.84 $^{\pm 0.02}$ & 63.83 $^{\pm 0.18}$ & 7.5 $^{\pm 0.23}$ & 67.86 $^{\pm 0.95}$ & 11.8 $^{\pm 3.3}$ & 43.13 $^{\pm 4.95}$ \\
\bottomrule
\end{tabular}
}}
\end{table}

\subsection{Optimization}
\subsubsection{Local optimization}\label{app:ablation_learning}
In Section \ref{sec:local_optim} we proposed a simple data-free and cheap local optimization strategy. Here, we ablate the value of this for overall training stability and speed. We train Llama-3.2 3B-it with \NAME end-to-end, with and without first locally optimizing for 200 steps (see Eq. \ref{eq:local_optimization_r1}). We repeat the experiment for $[0,32,128,256,512]$ number of end-to-end training steps. As before, we use batch size 16 and sequence length 2048, and 10\% warm-up steps.

We observe (Figure \ref{fig:local_optim}) that local optimization significantly improves pre-training performance. More importantly, the better initialization advantage persists during end-to-end training, partly due to a more stable training process. With larger number of end-to-end steps, local optimization becomes less beneficial. Locally optimizing all transforms sequentially for 200 steps each takes only about 9 minutes (equivalent in wall time to about 20 end-to-end training steps), and this could be reduced further by parallelizing. Consequently, we find that local optimization is a simple approach to make end-to-end training faster and more efficient.

\begin{figure*}[ht]
    \centering
    \begin{subfigure}[b]{0.5\textwidth}
        \centering
        \includegraphics[width=\textwidth]{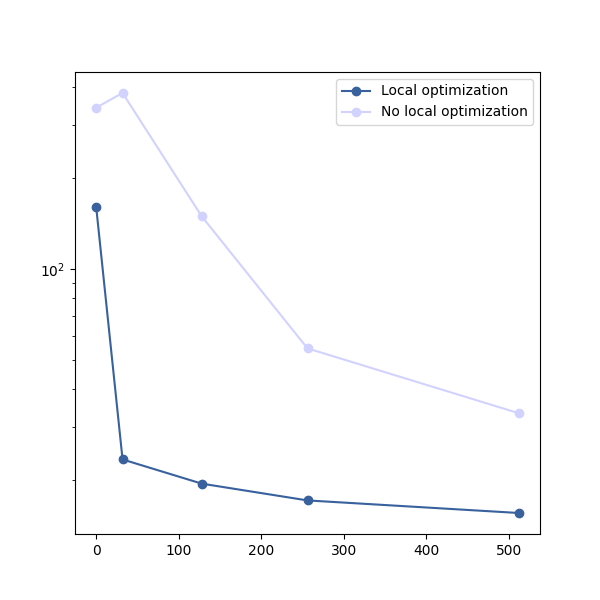}
        \caption{Wiki}
    \end{subfigure}%
    ~ 
    \begin{subfigure}[b]{0.5\textwidth}
        \centering
        \includegraphics[width=\textwidth]{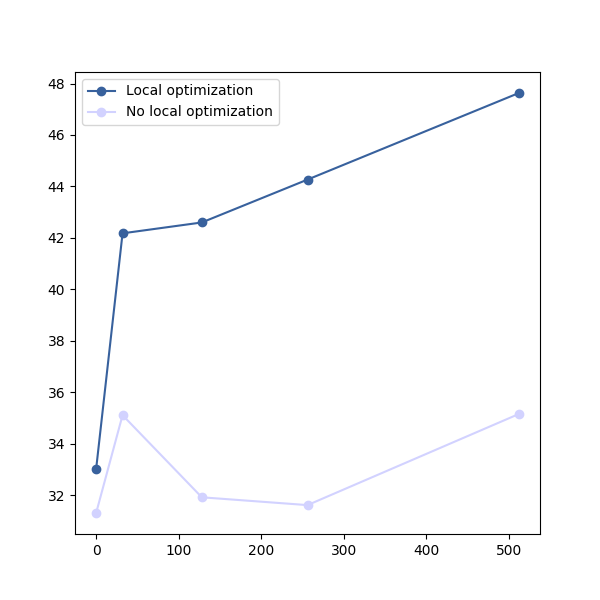}
        \caption{0-shot$^6$}
    \end{subfigure}
    \caption{\textbf{Local optimization (Section \ref{sec:local_optim}) leads to more stable and faster end-to-end training.} We train \NAME on Llama-3.2 3B instruct with and without local optimization, for different number of end-to-end training steps.} \label{fig:local_optim}
\end{figure*}

\paragraph{Choosing $p$.} During local optimization, we minimize the $L_p$ of merged weights. In Table \ref{tab:local_opt_p} we ablate different values of $p$ for the same setting as before, evaluating the performance before and after end-to-end training. We find that no $L_p$ performs significantly better than another---though not using local optimization ("No opt") does significantly worse on average and has a large variance. As before (Figure \ref{fig:local_optim}), this shows that local optimization improves training stability, though the choice of $p$ is less important.

\begin{table}[hbt]
\caption{\textbf{Influence of $p$ on local $L_p$ optimization.} We run FPTQuant with different local optimization losses $L_p$ (and without local optimization, \textit{no opt}). We use Llama-3.2 3B it with the same settings as the main experiment (Section \ref{sec:exp_static}). We use 3 seeds. \textit{For some runs, only one MMLU evaluation finished. In these cases we leave out the standard deviation.}}
\label{tab:local_opt_p}
{\resizebox{0.8\textwidth}{!}{
\renewcommand{\arraystretch}{1.05}
\begin{tabular}{cc|lll|lll}\toprule
\textbf{\#Bits}&   $p$    & \multicolumn{3}{c|}{Before end-to-end training}   & \multicolumn{3}{c}{After end-to-end training}  \\
$_\text{(W-A-KV)}$                  &       & \textbf{Wiki}          & \textbf{0-Shot$^6$}         & \textbf{MMLU}        & \textbf{Wiki}          & \textbf{0-Shot$^6$ }        & \textbf{MMLU}       \\ \midrule
\multirow{6}{*}{4-8-4} & No opt & 12.70 $^{\pm 0.41}$  & 54.27 $^{\pm 0.45}$ & 28.63  & 11.18 $^{\pm 0.02}$ & 60.65 $^{\pm 0.10}$ & 47.80  \\
                       & 2      & 11.90 $^{\pm 0.03}$  & 56.84 $^{\pm 0.89}$ & 38.74 $^{\pm 1.44}$ & 11.10 $^{\pm 0.02}$ & 61.96 $^{\pm 0.25}$ & 50.21    \\
                       & 3      & 11.79 $^{\pm 0.08}$  & 57.75 $^{\pm 0.13}$ & 43.95 $^{\pm 1.20}$ & 11.20 $^{\pm 0.04}$ & 61.43 $^{\pm 0.48}$ & 50.96 $^{\pm 0.62}$ \\
                       & 4      & 11.83 $^{\pm 0.03}$  & 59.62 $^{\pm 0.54}$ & 45.06 $^{\pm 0.67}$ & 11.19 $^{\pm 0.03}$ & 62.01 $^{\pm 0.34}$ & 50.63 $^{\pm 1.92}$ \\
                       & 5      & 11.82 $^{\pm 0.03}$  & 58.97 $^{\pm 0.30}$ & 45.94 $^{\pm 1.42}$ & 11.26 $^{\pm 0.05}$ & 61.86 $^{\pm 0.34}$ & 50.45 $^{\pm 1.04}$ \\
                       & 6      & 11.78 $^{\pm 0.03}$  & 58.23 $^{\pm 0.08}$ & 43.91 $^{\pm 0.46}$ & 11.17 $^{\pm 0.02}$ & 61.83 $^{\pm 0.51}$ & 51.02 $^{\pm 0.25}$ \\ \hline
\multirow{6}{*}{4-4-4} & No opt & 4114 $^{\pm 321}$ & 29.63 $^{\pm 0.54}$ & 24.55 $^{\pm 0.55}$ & 12.70 $^{\pm 0.41}$ & 54.27 $^{\pm 0.45}$ & 28.63  \\
                       & 2      & 340.01 $^{\pm 17.42}$ & 31.36 $^{\pm 0.15}$ & 24.76 $^{\pm 0.24}$ & 11.90 $^{\pm 0.03}$ & 56.84 $^{\pm 0.89}$ & 38.74 $^{\pm 1.44}$ \\
                       & 3      & 401.93 $^{\pm 35.12}$ & 30.84 $^{\pm 0.15}$ & 24.50 $^{\pm 0.15}$ & 11.79 $^{\pm 0.08}$ & 57.75 $^{\pm 0.13}$ & 43.95 $^{\pm 1.20}$ \\
                       & 4      & 435.74 $^{\pm 78.71}$ & 30.40 $^{\pm 0.53}$ & 24.84 $^{\pm 0.46}$ & 11.83 $^{\pm 0.03}$ & 59.62 $^{\pm 0.54}$ & 45.06 $^{\pm 0.67}$ \\
                       & 5      & 340.46 $^{\pm 2.87}$  & 31.10 $^{\pm 0.29}$ & 25.27  & 11.82 $^{\pm 0.03}$ & 58.97 $^{\pm 0.30}$ & 45.94 $^{\pm 1.42}$ \\
                       & 6      & 448.20 $^{\pm 41.51}$ & 31.21 $^{\pm 0.10}$ & 24.03  & 11.78 $^{\pm 0.03}$ & 58.23 $^{\pm 0.08}$ & 43.91 $^{\pm 0.46}$ \\ \bottomrule
\end{tabular}
\renewcommand{\arraystretch}{1}
}}
\end{table}

\subsubsection{Student-teacher training.} \label{app:student_teacher_ablation}
We compare the value of end-to-end training in a student teacher fashion (E2E[ST]), versus the original next-token prediction loss (E2E[label]) used in e.g. SpinQuant \citep{liu_spinquant_2024}.
We take Llama-3.2 3B instruct and use the same set-up as before---training on Wikitext with sequence length 2048, 1024 training steps, and batch size 16.

We observe (Table \ref{tab:ablation_e2e}) that next-token prediction leads to consistently better Wikitext perplexity. This makes sense: the loss for next token prediction is highly similar to the loss of Wikitext perplexity, and since we train and evaluate on (different splits of) Wikitext, the next-token prediction loss fine-tunes the FPT weights and quantization grid to directly minimize this loss. However, we also observe that for FPTs with learnable transforms and hence more capacity (SpinQuant, \NAME), \textbf{the next-token prediction leads to significantly worse 0-shot performance}, which indicates that the next-token prediction loss generalizes poorly to tasks that are different than the training set. In other words, the next-token prediction loss allows the model to overfit to the target task. Student-teacher training does not allow the same level of overfitting, since the output is fitted to match the whole unquantized output vector. This also has the added value that a whole vector of probabilities (student-teacher training) provides more signal than a one-hot label (next-token prediction). 

It may seem counterintuitive that FPTs can lead to such overfitting, since they are designed to preserve the model function (\ref{des:identity}). However, note that most FPTs have a large number of trainable parameters (Table \ref{tab:comparing_transforms}), which together with a learnable quantization grid, including activation clipping, entails a large capacity to change the function \emph{post-quantization}.
For example, next-token prediction could relatively easily decrease the loss by increasing the probability of words that are typical Wikitext lingo (which could be achieved through simple clipping of non-typical tokens). Student-teacher loss would not benefit from this, since even for untypical tokens, it needs to match the output probability.

FPTs are appealing because they do not alter the model's function significantly and do not require significant training. The tendency of next-token prediction to overfit to the training task is undesirable to this end; overfitting alters the model function significantly, and to avoid it we would need to train for longer with more tasks. Consequently, we discourage researchers from using next-token prediction for training FPTs, unless they desire to fine-tune to a specific training set.

\begin{table}[hbt]
    \centering
    \caption{\textbf{Student-teacher training of FPTs is better for generalization than next-token prediction.} We compare two end-to-end training approaches on Llama-3.2 3B instruct (W4A4KV4 static quantization): next-token prediction, e.g., used in SpinQuant, versus student-teacher training. We observe that for learnable FPTs (SpinQuant, FPTQuant), next-token prediction is able to fit the training set (Wikitext) better, leading to lower Wikitext perplexity. However, this does not generalize—the 0-shot common-sense reasoning performance of these models is consistently lower than their student-teacher equivalent.}
    \label{tab:ablation_e2e}
\begin{tabular}{l|l|cc}
\toprule
\textbf{Loss} & \textbf{Method} & \textbf{Wiki} & \textbf{0-Shot$^6$} \\
\midrule
E2E[label] & RTN-opt   & \textbf{46.29}  & \textbf{32.98}  \\
E2E[ST]    & RTN-opt  & 46.84  & 31.16  \\
\hline
E2E[label] & SpinQuant   & \textbf{11.23}  & 50.58  \\
E2E[ST]    & SpinQuant   & 12.71  & \textbf{54.88}  \\
\hline
E2E[label] & FPTQuant    & \textbf{11.58}  & 51.73  \\
E2E[ST]    & FPTQuant    & 11.71  & \textbf{59.27}  \\
\bottomrule
\end{tabular}
\end{table}


\section{Quantization settings extended} \label{app:table_setting_longer}
We extend Table \ref{tab:settings} to include 0-shot performance and extra models. Table \ref{tab:settings_extended} includes Llama-3.2 3B instruct and Llama-3 8B, as well as Qwen-2.5 7B instruct \citep{yang_qwen25_2024}. The latter deteriorates significantly for all transforms at 4-bit activations, due to more challenging activation distributions---see Table \ref{tbl:E_ablation_act_quantizers}. Consequently, we use W4A4KV4 for the Llama models, but W4A8KV8 for Qwen.
\begin{table}[hbt]
\centering
\caption{\small \textbf{\NAME does better at the hardest quantization setting (iii).} Table \ref{tab:settings} extended. Exploring different activations quantization settings with W4KV4A4 (Llama-3.2 3B instruct and Llama-3 8B) and W4A8KV8 (Qwen-2.5 7B instruct). \textit{Linears+KV} is the setting used in \citep{ashkboos_quarot_2024,liu_spinquant_2024,sun_flatquant_2024}. \textit{+BMM input} also quantizes the inputs to the attention batched matmuls. \textit{All except residual} includes all intermediate activations, except for residual. We see that FPTQuant tends to underform slightly on +BMM, due to the Pre-RoPE transform being cheaper, but less expressive, than baseline FPTs. This is compensated on the strictest setting, where FPTQuant almost consistently outperforms both baselines.}
\label{tab:settings_extended}
\setlength{\tabcolsep}{0.8mm}
\begin{tabular}{c|l||cc|cc|cc} \toprule
                                       &           & \multicolumn{2}{c|}{\textbf{L3.2-3B it}} & \multicolumn{2}{c|}{\textbf{L3-8B}} & \multicolumn{2}{c}{\textbf{Q2.5-7B it}} \\
\textbf{Quant} & \textbf{Method} & Wiki & 0-shot$^6$ & Wiki & 0-shot$^6$ & Wiki & 0-shot$^6$ \\
 &  & ($\downarrow$) & Avg.($\uparrow$) & ($\downarrow$) & Avg.($\uparrow$) & ($\downarrow$) & Avg.($\uparrow$)  \\ \hline
\multirow{3}{*}{(i) Linear+KV}             & Spinquant & 12.73            & 52.85             & 11.04             & 54.58              & 7.66                    & 71.95                   \\
                                       & FlatQuant & 11.37            & 61.32             & 9.55              & 61.00                & 7.47                   & 72.69                   \\
\rowcolor{gray!20} \cellcolor{white}                                       & \NAME        & 12.78            & 54.27             & 9.74              & 59.27              & 7.61                   & 71.80                    \\ \hline
\multirow{3}{*}{(ii) +BMM}                & Spinquant & 12.47            & 53.96             & 17.57            & 37.84              & 7.87                   & 70.74                   \\
                                       & FlatQuant & 12.30            & 57.64             & 15.42            & 44.21              & 7.51                    & 72.04                   \\
\rowcolor{gray!20} \cellcolor{white}                                       & \NAME        & 13.72            & 49.66             & 12.14            & 45.09              & 7.74                   & 69.53                   \\ \hline
\multirow{3}{*}{(iii) All   except residual} & Spinquant & 20.83            & 39.94             & 52.27             & 34.04              & 9.23                    & 65.95                   \\
                                       & FlatQuant & 18.64            & 46.43             & 23.45            & 41.19              & 9.24                    & 66.78                   \\
\rowcolor{gray!20} \cellcolor{white}                                       & \NAME        & 16.95            & 44.77             & 18.51            & 41.84              & 8.44                   & 68.17          \\ \bottomrule        
\end{tabular}
\end{table}


\section{Reasoning performance and standard deviations} \label{app:reasoning_std}
To give insight into the stability of methods and significance of results, we repeat the experiment of Table \ref{tab:main_static} for Llama-3.2 3B instruct for multiple seeds and estimate standard deviation. We also include reasoning metrics 5-shot MMLU and GSM8K. This gives Table \ref{tab:reasoning_std}

\begin{table}[h!]
\centering
\caption{\textbf{More metrics and standard deviations.} Llama-3.2 3B-it W4A4KV4 and W4A8KV4 static quantization, run with 3 seeds to provide an estimate of standard deviation.}
\label{tab:reasoning_std}
\renewcommand{\arraystretch}{1.1}
\begin{tabular}{c|l|c|c|c|c}
\toprule
\textbf{\# Bits} & \textbf{Method} & \textbf{Wiki} & \textbf{0-shot}$^6$ & \textbf{5-shot MMLU} & \textbf{GSM8K} \\
$_\text{(W-A-KV)}$ &  & ($\downarrow$) &  ($\uparrow$) & ($\uparrow$) &($\uparrow$) \\
\midrule
16-16-16        & FP16    & 10.48         & 65.63         & 59.69         & 28.20         \\
\hline
4-8-4       & RTN-opt   & 11.68$^{\pm 0.07}$ & 58.65$^{\pm 0.47}$ & 46.45$^{\pm 1.04}$ & 12.56$^{\pm 2.13}$ \\
            & QuaRot    & 11.03$^{\pm 0.03}$ & 62.71$^{\pm 0.23}$ & 51.68$^{\pm 0.50}$ & 18.52$^{\pm 1.95}$ \\
            & SpinQuant & 11.50$^{\pm 0.07}$ & 61.96$^{\pm 0.11}$ & 52.14$^{\pm 0.41}$ & 22.29$^{\pm 1.02}$ \\
            & FlatQuant & 10.90$^{\pm 0.02}$ & 63.84$^{\pm 0.70}$ & 55.46$^{\pm 0.26}$ & 22.59$^{\pm 0.75}$ \\
\rowcolor{gray!20} \cellcolor{white} & FPTQuant  & 11.06$^{\pm 0.02}$ & 62.95$^{\pm 0.53}$ & 52.62$^{\pm 0.48}$ & 18.57$^{\pm 1.68}$ \\
\hline
4-4-4       & RTN-opt   & 64.86$^{\pm 4.09}$ & 32.48$^{\pm 0.18}$ & 24.98$^{\pm 0.18}$ &  1.14$^{\pm 0.06}$  \\
            & QuaRot    & 13.25$^{\pm 0.58}$ & 51.32$^{\pm 2.41}$ & 35.72$^{\pm 1.96}$ &  3.74$^{\pm 1.27}$  \\
            & SpinQuant & 13.04$^{\pm 0.08}$ & 53.44$^{\pm 0.48}$ & 38.17$^{\pm 0.73}$ &  5.61$^{\pm 0.61}$  \\
            & FlatQuant & 11.49$^{\pm 0.06}$ & 60.07$^{\pm 0.84}$ & 49.01$^{\pm 1.29}$ & 17.25$^{\pm 0.34}$ \\
\rowcolor{gray!20} \cellcolor{white} & FPTQuant  & 11.82$^{\pm 0.03}$ & 59.43$^{\pm 0.45}$ & 43.71$^{\pm 0.22}$ & 12.38$^{\pm 0.66}$ \\
\bottomrule
\end{tabular}
\renewcommand{\arraystretch}{1}
\end{table}

Overall, we see that in line with the main paper's results, FPTQuant outperforms baselines SpinQuant and QuaRot almost consistently. Especially on the low bitwidth W4A4KV4, FPTQuant improves over SpinQuant/QuaRot very significantly. The more expensive FlatQuant performs comparably to FPTQuant for Wiki and 0-shot, but outperforms FPTQuant on the more sensitive reasoning tasks.

\section{Dynamic quantization with GPTQ}
\label{app:gptq}
We repeat the dynamic quantization experiment (Section \ref{sec:exp_dynamic} with GPTQ weight quantization. See Table \ref{tab:dynamic_gptq}. We find that FPTQuant and FlatQuant are not helped consistently by GPTQ, likely because their transforms are already trained to reduce weight quantization error for the RTN results. We leave it to future work to improve the results with GPTQ weight quantization.
\begin{table*}[!t]
\centering
\caption{\small \textbf{Dynamic quantization.} We run the dynamic quantization experiment (W4A4KV4) from FlatQuant (Table 1 and Table 2, \citep{sun_flatquant_2024}), reporting their results for baselines (marked~\ts{*}). 
\NAME is on par or better than most of the baselines, except FlatQuant, yet FlatQuant is up to 29\% slower.}
\vspace{-2mm}
\label{tab:dynamic_gptq}
\setlength{\tabcolsep}{2mm}
{\resizebox{0.7\textwidth}{!}{
\begin{tabular}{lc||cc|cc|cc} 
\toprule
& & \multicolumn{2}{c|}{\textbf{Llama-2 7B}} & \multicolumn{2}{c|}{\textbf{Llama-2 13B}} & \multicolumn{2}{c}{\textbf{Llama-3 8B}} \\
\textbf{Method} & \textbf{Weight} & Wiki & 0-shot$^6$ & Wiki & 0-shot$^6$ & Wiki & 0-shot$^6$  \\
& \textbf{Quantizer}& ($\downarrow$) & Avg.($\uparrow$) & ($\downarrow$) & Avg.($\uparrow$) & ($\downarrow$) & Avg.($\uparrow$) \\ \hline
FP16 	&	 - 	&	5.47	&	69.79	&	4.88	&	72.55	&	6.14	&	73.33	\\
\hline
SmoothQuant\ts{*} 	&	 RTN 	&	83.1	&	 - 	&	35.9	&	 -	&	210	&	 - 	\\
QuaRot\ts{*} 	&	 RTN 	&	8.56	&	57.73	&	6.10	&	66.25	&	10.60	&	61.34	\\
SpinQuant\ts{*} 	&	 RTN 	&	6.14	&	63.52	&	5.44	&	68.56	&	7.96	&	66.98	\\
OSTQuant	&	 RTN 	&	6.38	&	65.88	&	5.34	&	69.87	&	7.98	&	68.32	\\
FlatQuant\ts{*} 	&	 RTN 	&	5.79	&	67.96	&	5.12	&	71.42	&	6.98	&	71.23	\\
\rowcolor{gray!20} \textbf{\NAME} 	&	 RTN 	&	5.97	&	66.06	&	5.37	&	69.81	&	7.67	&	68.41	\\
\midrule 
QuaRot\ts{*} 	&	GPTQ	&	6.10	&	65.01	&	5.40	&	68.91	&	8.16	&	65.79	\\
SpinQuant\ts{*} 	&	GPTQ	&	5.96	&	66.23	&	5.24	&	70.93	&	7.39	&	68.70	\\
OSTQuant	&	GPTQ	&	5.92	&	66.58	&	5.29	&	70.03	&	7.32	&	68.64	\\
FlatQuant\ts{*} 	&	GPTQ	&	5.78	&	67.47	&	5.11	&	71.64	&	6.90	&	71.33	\\
\rowcolor{gray!20} \textbf{\NAME} 	&	GPTQ	&	6.07	&	66.44	&	5.35	&	69.97	&	7.60	&	68.70	\\
\bottomrule
\end{tabular}
}}  
\vspace{-4mm}
\end{table*}

\section{Detailed benchmarking results} \label{app:detailed_benchmarking_results}

In this section, we provide additional details on runtime performance setup and evaluation of our method using dynamic quantization in comparison to other methods using FTPs.

\paragraph{Setup}
We implement \NAME, SpinQuant, FlatQuant, and INT4 baselines (using static and dynamic quantization) using PyTorch CUDA/12.1 and using INT4 CUTLASS kernels from QuaRot repository\footnote{\url{https://github.com/spcl/QuaRot}}.
All the measurements are conducted on NVIDIA RTX 3080 Ti.
We provide all our experiments on a single transformer block as the whole model does not fit on a single GPU for big enough model size and/or the batch size.
We repeat each measurement 1000 times and report the mean speedup relative to FP16 baseline.

Specifically, we use CUTLASS kernels for quantization/de-quantization and linear layers.
Because there is no native INT4 support on Nvidia hardware yet, we use INT8 storage, where each entry represents a pair of INT4 numbers (``double-packed'' representation).
The kernel for a linear layer, for instance, takes two packed tensors representing weights and activations and computes the matmul assuming INT32 accumulator. 
Note that query-key and softmax-value BMMs, which are crucial part of the computation, as well as elementwise multiplication in SwiGLU are not quantized in our simulations, and instead are kept in FP16.

%
\begin{figure*}[tp]
    \centering
    \includegraphics[width=\textwidth]{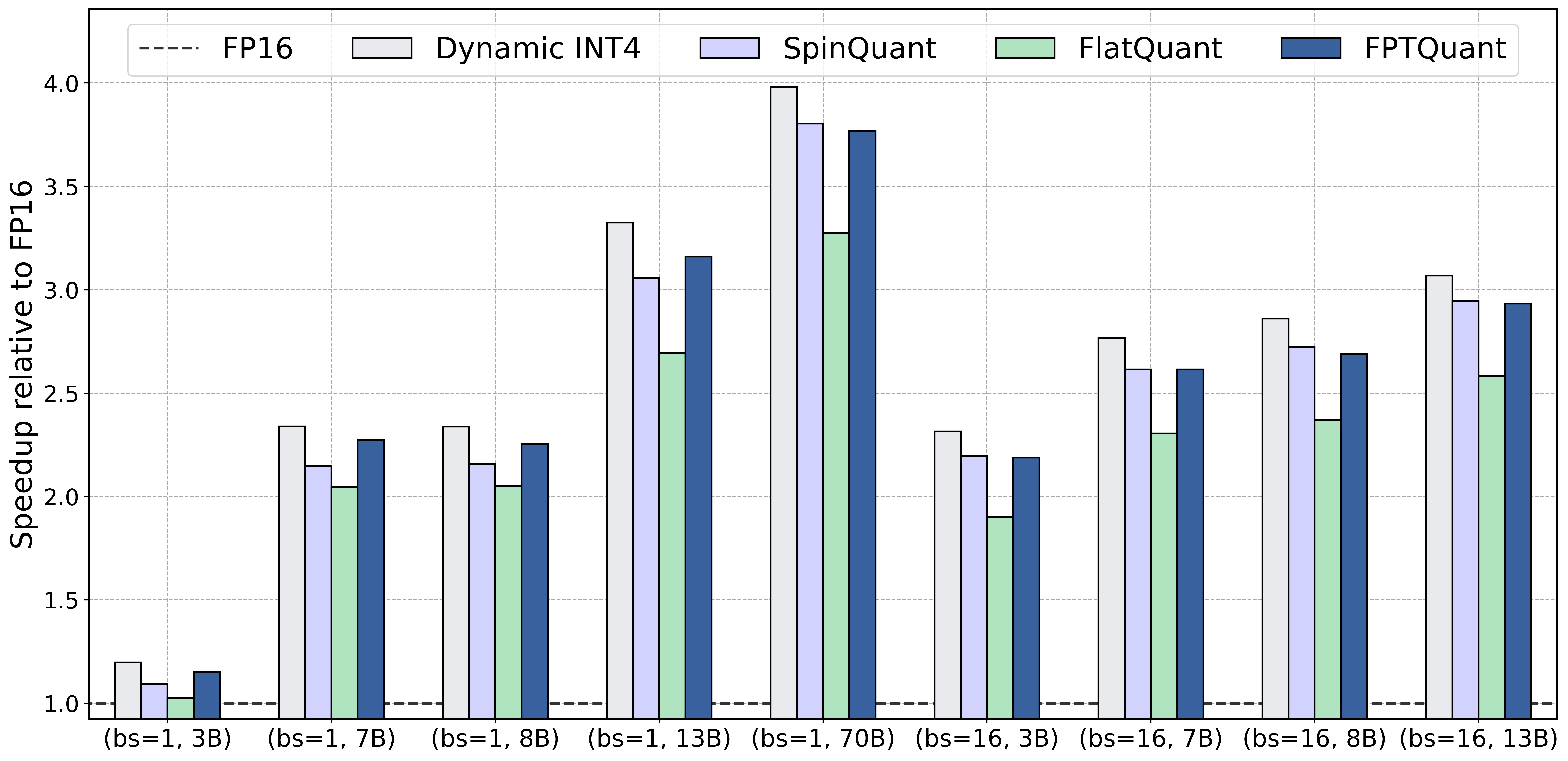}
    \vspace{-.5cm}
    \caption{\textbf{Dynamic INT4 prefill speedup} of \NAME on a single transformer block of LLaMA models across different sizes (3B, 7B, 8B, 13B, and 70B), and batch sizes (1 and 16). We use a sequence length of 1024.}
    \label{fig:04_speedup_dynamic}
\end{figure*}

\paragraph{Dynamic INT4 runtime}
Figure~\ref{fig:04_speedup_dynamic} shows the prefill speedup of \NAME across different batch sizes and model sizes, assuming dynamic INT4 quantization.
For most configurations, we still get a solid $2.4\times$ -- $3.8\times$ speedup over the FP16 implementation.
The speedup is again consistently increasing with model size and batch size, as the computation becomes the main bottleneck.
\NAME is on par or faster than SpinQuant and consistently faster than FlatQuant, with a relative speedup of 11-21\%.
Similar to static case, \NAME is once again within a 3-6\% to the INT4 upper bound.


\section{Compute resources}\label{app:compute_resources}
\subsection{Training cost} \label{app:training_cost}
Although the FPTQuant transforms are mergeable (except Hadamard transform $\tT_d$), we need to consider their training cost.

We detail the training times of the runs from Table \ref{tab:main_static}. For all methods, we trained with a batch size of 4, sequence length 2048, and gradient checkpointing per transformer block, which allowed us to run on a single A100 GPU. We time total training and average over $1024 \times 4$ steps  (i.e., 1024 training steps with gradient accumulation of 4), see Table \ref{tab:training_cost}.

\begin{table}[h!]
\centering
\caption{Average training time per step (seconds) across different models and methods.}
\label{tab:training_cost}
\begin{tabular}{l|c|c|c}
\hline
\textbf{Method} & \textbf{Llama-3.2 3B it} & \textbf{Llama-3 8B} & \textbf{Llama-2 7B} \\
\hline
RTN-opt   & 4.5  & 7.7  & 7.3  \\
QuaRot    & 5.3  & 8.7  & 8.4  \\
SpinQuant & 7.9  & 12.0 & 11.6 \\
FlatQuant & 10.3 & 13.7 & 19.5 \\
\rowcolor{gray!20} FPTQuant  & 7.1  & 12.6 & 18.3 \\
\hline
\end{tabular}
\end{table}

The results are unsurprising. RTN-opt only optimizes the quantization parameters and is trivially the fastest. QuaRot achieves almost the same time---it only adds static Hadamard Transforms to RTN-opt, which incurs virtually no cost.

SpinQuant is significantly more expensive than QuaRot, because it optimizes the relatively high-dimensional rotation matrix $\tT_r$ ($R_1$ in their paper). This is more expensive than a standard matrix multiplication at training time, since it requires internally parametrizing the rotation matrix using either Cayley or matrix exponentials (see \texttt{torch.nn.utils.parametrizations.orthogonal}). On average, FPTQuant is slightly slower than SpinQuant. This makes sense, considering FPTQuant includes more optimizable transforms in addition to rotation transform $\tT_r$. Note that even the most expensive training of FPTQuant (Llama-2 7B for 4096 steps) takes less than 1 single GPU day. Most expensive is FlatQuant, which includes multiple non-mergeable, trainable transforms, including multiple explicit reshapes of activations. 

The relative training time of FPTQuant can decrease further with a bit of optimization when sequence length, batch size, or gradient accumulation increase---since all new FPTQuant transforms are mergeable (except the cheap dynamic per-token scaler), they can be merged into the weights prior to passing data and thus do \textbf{not} scale with input size. In contrast, FlatQuant transforms almost always have a forward transform applied online at both training and inference time, and thus scale linearly with the input batch size and sequence length.

\textit{Note: local optimization (Section \ref{sec:local_optim}) of FPTQuant is negligible; until convergence takes around 8 minutes for Llama-2 7B (1<\% of training).}

\subsection{Training cost of inverse $\mathbf{T}_v$}
We may wonder about the cost of the inverse of transform $\tT_v$. Surprisingly, the inverse of $\tT_v$ is cheap to compute during training. This is partly because of the dimension $d_{\text{head}} = 128$ (for tested models), and partly because it can be computed in parallel across the heads. In Table \ref{tab:benchmark_rotation} we time the inverse forward and backward passes, compared to SpinQuant's rotation (parametrized in \texttt{torch.nn.utils.parametrizations.orthogonal}). We find that an inverse is cheaper to compute and backpropagate through than parametrized rotations.

\paragraph{SVD for inverse.} Large head dimensions can also be supported efficiently by using a singular value decomposition (SVD) to parametrize $\tT_v$ during training. For one head $h$, we parametrize:
\[
\tT_v^h = \mathbf{U} \, \text{diag}(\mathbf{s}) \, \mathbf{V}, \quad \text{with } \mathbf{s} \text{ a vector and } \mathbf{U}, \mathbf{V} \in O(d_{\text{head}})
\]
This requires rotation reparametrizations and more memory, but the inverse is simpler:
\[
\tT_v^h = \mathbf{V}^T \, \text{diag}(\mathbf{s}^{-1}) \, \mathbf{U}^T
\]
In our experiments, however, the SVD parametrization was slower and \emph{not} more accurate than the direct inverse (see Table \ref{tab:benchmark_rotation}), even for very large head dimensions. This is probably due to the need for two orthogonal matrix parametrizations. Hence, in all the paper's experiments we use the direct inverse.

\begin{table}[h]
\centering
\caption{\textbf{Benchmarking cost of different transform operations for various sizes.} Benchmarking was performed 1000 times with 5 repeats using \texttt{torch.utils.benchmark}, with input size $(1024, 8, \text{dim})$ on an A100. Typical head sizes are 64/128. We observe that the direct inverse is fast. Rotations are slightly slower due to expensive parametrizations (Cayley or matrix exponentials). SVD decomposition for the inverse is even more expensive due to modelling two orthogonal matrices.}
\label{tab:benchmark_rotation}
\renewcommand{\arraystretch}{1.1}
\begin{tabular}{l|c|c|c}
\toprule
\textbf{Operation} & \textbf{Dim} & \textbf{Forward (ms)} & \textbf{Forward + Backward (ms)} \\
\midrule
Inverse (direct)      & 64   & 0.463$^{\pm 0.009}$  & \textbf{1.405$^{\pm 0.010}$}   \\
Inverse (via SVD)     & 64   & 1.031$^{\pm 0.006}$  & 2.586$^{\pm 0.011}$  \\
Rotation (Cayley)     & 64   & \textbf{0.439$^{\pm 0.003}$}  & 1.523$^{\pm 0.008}$  \\
Rotation (matrix exp) & 64   & 0.443$^{\pm 0.006}$  & 2.706$^{\pm 0.030}$   \\
\hline
Inverse (direct)      & 128  & 0.623$^{\pm 0.008}$  & \textbf{2.014$^{\pm 0.022}$}  \\
Inverse (via SVD)     & 128  & 1.322$^{\pm 0.008}$  & 3.536$^{\pm 0.020}$   \\
Rotation (Cayley)     & 128  & 0.584$^{\pm 0.002}$  & 2.200$^{\pm 0.016}$  \\
Rotation (matrix exp) & 128  & \textbf{0.438$^{\pm 0.004}$}  & 2.877$^{\pm 0.013}$  \\
\hline
Inverse (direct)      & 1024 & 4.916$^{\pm 0.011}$  & \textbf{34.765$^{\pm 0.329}$} \\
Inverse (via SVD)     & 1024 & 8.611$^{\pm 0.009}$ & 41.090$^{\pm 0.456}$  \\
Rotation (Cayley)     & 1024 & 4.678$^{\pm 0.004}$  & 35.869$^{\pm 0.526}$ \\
Rotation (matrix exp) & 1024 & \textbf{1.454$^{\pm 0.005}$}  & 51.086$^{\pm 2.286}$ \\
\bottomrule
\end{tabular}
\renewcommand{\arraystretch}{1}
\end{table}

\subsection{Total compute cost for paper}
All the experiments were executed on a single Nvidia A100 GPU equipped with 80GB of VRAM.
Models of sizes 3B, 7B and 8B needed respectively around 9.1, 14.5, and 16.5 hours of training with {\NAME}, assuming the main setup with $1024$ training steps, sequence length $2048$, and a total batch size of $16 = 4$ (per-device batch size) $\times 4$ (gradient accumulation).
For obtaining all the results in the paper, including the ablations, we needed {69.8} GPU days (A100). 
Including preliminary experiments that did not make it in the final paper and hyperparameter tuning we estimate the total compute costs of this research to approximately {386} GPU days.
%


\section{Guide to choosing FPTs} \label{app:guide_fpts}
When choosing FPTs, there is a trade-off between expressivity (\ref{des:expressivity}) and cost ( \ref{des:cost}). With \NAME, we have aimed to find maximally expressive FPTs that are mergeable or very cheap. FlatQuant is a strong baseline that regularly outperforms \NAME, although this incurs a cost. Fortunately, in practice we can choose on a case-by-case basis which FPTs to include. Here we provide a high-level guide to adding FPTs to your own model.

\begin{enumerate}
    \item \textit{Explore.} Evaluate quantization error per quantizer placement (e.g. Appendix \ref{app:quant_error})
    \item \textit{Choose transforms.} Based on step 1, choose which FPTs to add:
    \begin{enumerate} 
        \item \textit{Attention and FFN input.} $R_1$ (SpinQuant) and $P_a,P_d$ (FlatQuant) are similar transforms. The first is shared across all layers of the model, whilst FlatQuant's are not. However, an orthogonal matrix $R_1$ has about $d^2/2$ degrees of freedom, whereas each of FlatQuant's Kronecker transforms only has about $2d$ degrees of freedom.\footnote{Of course, this ignores that more degrees of freedom does not necesarilly mean the same space of possible transforms is navigated---e.g. FlatQuant does not have an orthogonality constraint. Nonetheless, we have found $R_1$ to perform comparable as $P_a,P_d$.} Additionally, $R_1$ is mergeable, whereas $P_a$ and $P_d$ are not. As a result of this, $R_1$ should have preference unless a per-layer independent FPT like $P_a,P_d$ is warranted---e.g. if some layers have much higher quantization error than others.
        \item \textit{Keys and queries.} Depending on how difficult queries and keys are to quantize, one can choose $\tT_k$, $R_3$ (SpinQuant), or $P_h$ (FlatQuant), in increasing order of expense and power (see Appendix \ref{app:ablation_transforms})
        \item \textit{Values.} The $\tT_v$ transform is more expressive than baselines and as a result better at reducing quantization error (Appendix \ref{app:ablation}). Since it is mergeable and hence free, this should always be used for improving value and out projection input
        \item \textit{Down projection input.} For many networks, these activations are the trickiest to quantize (Appendix \ref{app:quant_error}), which usually warrants an online transform (e.g. Hadamard). If a Hadamard is used, adding the mergeable $\tT_u$ improves quantization further (Appendix \ref{app:ablation_transforms}).
        \item \textit{Residual} A dynamic residual scaler $S$ can aid quantization if the residual has large outliers in particular tokens. There are multiple possible placements for $S$ (Section \ref{sec:dynamic_scaler}), e.g. on the softmax output and after SwiGLU.
    \end{enumerate}
    \item \textit{Initialize FPTs.} Initialize transforms, e.g. as a Welsh-Hadamard matrix or identity. 
    \item \textit{Locally optimize FPTs.} Locally optimizing transforms improves performance and reduces training time, whilst incurring very little cost (Appendix \ref{app:ablation_learning}).
    \item \textit{Set quantization range.} Set the initial quantization grid, e.g. using $L_3$ minimization (Appendix \ref{app:exp_details}). It is important to only set the grid now, so that initialized FPTs can be taken into account when choosing this grid.
    \item \textit{Train end-to-end.} Train the FPTs and quantization grid end-to-end, with the unquantized outputs as target.
\end{enumerate}

\section{Function-preservation and sensitivity analysis to noisy training} \label{app:sensitivity}

The function-preserving property (desideratum P1) of FPTs is useful because it reduces the capacity to change the pretrained model's output, and consequently can avoid overfitting to calibration data. We have conducted a sensitivity analysis to show the function-preserving properties of different transforms \emph{without quantization}. This also gives insight into training stability---if the output of the model is stable w.r.t. the parametrization of transforms, it means noisier gradient updates are less likely to lead to unstable training.

We take the initialized transforms, and simulate noisy training dynamics by perturbing the parameters; we add i.i.d. Gaussian noise with standard deviation $\sigma\in \{0, 0.1, 0.3, 1.0, 3.0\}$ to each transform parameter. Naturally, we ensure the parametrizations remain correct---i.e. that an orthogonal matrix remains orthogonal (using \texttt{torch.nn.utils.parametrization}). We do not add quantizers, since we want to test desideratum \ref{des:identity}. We run it for three model sizes of Llama-3.2/3, of 1B, 3B, and 8B parameters, and for 5 seeds. See Table \ref{tab:sensitivity}.

\begin{table}[bt]
\caption{\textbf{FPTQuant is function-preserving, and is stable during optimization.} We add i.i.d. Gaussian noise $N(0,\sigma^2)$ to all transform parameters, keeping parametrization constraints (e.g. orthogonality) intact. We observe that SpinQuant and FPTQuant remain completely constant---even for larger noise, the output of the model remains the same (as desired by \ref{des:identity})}
\label{tab:sensitivity}
\renewcommand{\arraystretch}{1.1}
\begin{tabular}{lccccc} \toprule
\multicolumn{1}{l|}{$\sigma\rightarrow$}     & \textbf{0}      & \textbf{0.1}       & \textbf{0.3}       & \textbf{1.0}             & \textbf{3.0}         \\ \hline
\multicolumn{6}{c}{\textit{L3.2-1B it}}                                                                                                                  \\ \hline
\multicolumn{1}{l|}{SpinQuant} & $13.16^{\pm 0.00}$ & $13.16^{\pm 0.00}$ & $13.16^{\pm 0.00}$ & $13.16^{\pm 0.00}$       & $13.16^{\pm 0.00}$   \\
\multicolumn{1}{l|}{OSTQuant}  & $13.16^{\pm 0.00}$ & $13.20^{\pm 0.02}$ & $19.01^{\pm 4.92}$ & $3.1^{\pm 0.7}\cdot 10^4$ & $4.1^{\pm 1.5}\cdot 10^4$ \\
\multicolumn{1}{l|}{FlatQuant} & $13.16^{\pm 0.00}$ & $13.16^{\pm 0.00}$ & $13.18^{\pm 0.02}$ & $5.5^{\pm 7.5}\cdot 10^5$ & $2.6^{\pm 4.5}\cdot 10^2$    \\
\rowcolor{gray!20} \multicolumn{1}{l|}{\textbf{FPTQuant}}  & $13.16^{\pm 0.00}$ & $13.16^{\pm 0.00}$ & $13.16^{\pm 0.00}$ & $13.16^{\pm 0.00}$       & $13.16^{\pm 0.00}$   \\ \hline
\multicolumn{6}{c}{\textit{L3.2-3B it}}                                                                                                                  \\ \hline
\multicolumn{1}{l|}{SpinQuant} & $11.05^{\pm 0.00}$ & $11.05^{\pm 0.00}$ & $11.05^{\pm 0.00}$ & $11.05^{\pm 0.00}$       & $11.05^{\pm 0.00}$   \\
\multicolumn{1}{l|}{OSTQuant}  & $11.05^{\pm 0.01}$ & $11.06^{\pm 0.05}$ & $14.58^{\pm 1.78}$ & $1.4^{\pm 0.5}\cdot 10^4$    & $1.4^{\pm 0.5}\cdot 10^4$ \\
\multicolumn{1}{l|}{FlatQuant} & $11.05^{\pm 0.00}$ & $11.05^{\pm 0.00}$ & $11.63^{\pm 1.07}$ & $1.5^{\pm 2.9}\cdot 10^5$     & $1.3^{\pm 2.5}\cdot 10^5$ \\
\rowcolor{gray!20} \multicolumn{1}{l|}{\textbf{FPTQuant}}  & $11.05^{\pm 0.00}$ & $11.05^{\pm 0.00}$ & $11.05^{\pm 0.00}$ & $11.05^{\pm 0.00}$       & $11.05^{\pm 0.00}$   \\ \hline
\multicolumn{6}{c}{\textit{L3-8B}}                                                                                                                       \\ \hline
\multicolumn{1}{l|}{SpinQuant}          & $6.14^{\pm 0.00}$  & $6.14^{\pm 0.00}$  & $6.14^{\pm 0.00}$  & $6.14^{\pm 0.00}$        & $6.14^{\pm 0.00}$    \\
\multicolumn{1}{l|}{OSTQuant}           & $6.14^{\pm 0.00}$  & $6.15^{\pm 0.00}$  & $8.02^{\pm 1.12}$  & $2.7^{\pm 1.5}\cdot 10^4$     & $3.2^{\pm 1.4}\cdot 10^4$  \\
\multicolumn{1}{l|}{FlatQuant}          & $6.14^{\pm 0.00}$  & $6.14^{\pm 0.00}$  & $6.35^{\pm 0.14}$  & $8.8^{\pm 11}\cdot 10^5$       & $4.8^{\pm 5.6}\cdot 10^5$  \\
\rowcolor{gray!20} \multicolumn{1}{l|}{\textbf{FPTQuant}}  & $6.14^{\pm 0.00}$  & $6.14^{\pm 0.00}$  & $6.14^{\pm 0.00}$  & $6.14^{\pm 0.00}$        & $6.14^{\pm 0.00}$  \\ \bottomrule
\end{tabular}
\renewcommand{\arraystretch}{1}
\end{table}

SpinQuant and FPTQuant are very stable---even when we completely randomize the transform parameters, we are ensured that the function-preserving properties are in fact, preserved, and that output is stable. FlatQuant is theoretically function-preserving, but is less stable: their approach consists of 6 transforms per transformer block, which each have 1 or 2 online matrix inversions (or each consisting of SVD decompositions, consisting of multiple matrix multiplications). The latter can result in floating point precision issues which destroy the function-preservation, roughly observed from $\sigma=0.3$ (for Llama-3.2 3B-it) and completely destroying the model performance at $\sigma=1$. We have found this is not an issue during training as long as a small learning rate is chosen (i.e. the noise is small and the optimizer can correct errors in later steps).

OSTQuant is not function-preserving; it uses smoothing transforms that do not cancel each other out. For example, their $S_{qk}$ transform does not commute with RoPE, and hence the query and key transforms do not cancel out (i.e. no function-preservation). We see this in the results. Smoothing vectors are initialized as identities, hence without noise the model works as expected (yields identical output to the original full precision network). When the transform weights are updated even with relatively small noise ($\sigma=0.3$), the model is no longer function-preserving and deviates significantly from the original model. This also means that the model has capacity to overfit the training data.


\end{document}